\DeclareSymbolFont{arrows3stix}{LS2}{stixtt}{m}{n}
\DeclareMathSymbol{\bigwhitestar}{\mathord}{arrows3stix}{"A0}
\newcommand{\cmark}{\ding{51}}%
\newcommand{\hypbox}[2]{%
\begin{tcolorbox}[colback=white!98!black,colframe=white!30!black,boxsep=1.1pt,top=6.75pt]%
\vspace{1.75pt}%
\textbf{#1}\\[-0.575em]
\noindent\makebox[\textwidth]{\rule{\textwidth}{0.4pt}}
\\[0.25em]
#2
\end{tcolorbox}
}
\theoremstyle{plain}
\newtheorem{theorem}{Theorem}[section]
\newtheorem{proposition}[theorem]{Proposition}
\theoremstyle{definition}
\theoremstyle{remark}
\newtheorem{remark}[theorem]{Remark}
\icmltitlerunning{The Platonic Representation Hypothesis}
\begin{document}

\twocolumn[
\icmltitle{The Platonic Representation Hypothesis}



\icmlsetsymbol{equal}{*}

\begin{icmlauthorlist}
\icmlauthor{Minyoung Huh}{equal,mit}
\icmlauthor{Brian Cheung}{equal,mit}
\icmlauthor{Tongzhou Wang}{equal,mit}
\icmlauthor{Phillip Isola}{equal,mit}
\end{icmlauthorlist}

\icmlaffiliation{mit}{MIT}

\icmlcorrespondingauthor{Minyoung Huh}{minhuh@mit.edu}

\icmlkeywords{Machine Learning, Representation, Artificial Intelligence, Multimodality}

\vskip 0.3in
]



\printAffiliationsAndNotice{\icmlEqualContribution} 


\begin{abstract}
We argue that representations in AI models, particularly deep networks, are converging. First, we survey many examples of convergence in the literature: over time and across multiple domains, the ways by which different neural networks represent data are becoming more aligned. 
Next, we demonstrate convergence across data modalities: as vision models and language models get larger, they measure distance between datapoints in a more and more alike way. 
We hypothesize that this convergence is driving toward a shared statistical model of reality, akin to Plato's concept of an ideal reality. We term such a representation the \textit{platonic representation} and discuss several possible selective pressures toward it. Finally, we discuss the implications of these trends, their limitations, and counterexamples to our analysis.

\vspace*{0pt}
\begingroup%
\fontsize{8.5pt}{10pt}\selectfont%
\noindent\begin{tabular}{@{}lr@{}}
\textbf{\fontsize{9.5pt}{10pt}\selectfont Project Page:}\hspace{0.2em} & \hspace{-1.025em}\href{https://phillipi.github.io/prh/}{\texttt{phillipi.github.io/prh}}\\[1.1ex]
{\textbf{\fontsize{9.5pt}{10pt}\selectfont Code:}} & \hspace{-1.75em}%
{
\href{https://github.com/minyoungg/platonic-rep/}%
{\texttt{github.com/minyoungg/platonic-rep}}%
}
\end{tabular}%
\endgroup%
\end{abstract}



\section{Introduction}

AI systems are rapidly evolving into highly multifunctional entities. For example, whereas in the past we had special-purpose solutions for different language processing tasks (\eg, sentiment analysis, parsing, dialogue), modern large language models (LLMs) are competent at all these tasks using a single set of weights~\cite{srivastava2022beyond}. Unified systems are also being built across data modalities: instead of using a different architecture for processing images versus text, recent models, such as GPT4-V~\cite{achiam2023gpt}, Gemini~\cite{team2023gemini}, and LLaVA~\cite{liu2023llava}, handle both modalities with a combined architecture. 
More and more systems are built off of general-purpose pretrained backbones, sometimes called foundation models~\cite{bommasani2021opportunities}, that support a large range of tasks, including robotics~\cite{driess2023palm,brohan2023rt}, bioinformatics~\cite{ma2024segment}, and healthcare~\citep{steinberg2021language}.
In short, AI systems are becoming increasingly homogeneous in both their architectures and their capabilities.

\begin{figure}[t]
    \centering
    \hypbox{The Platonic Representation Hypothesis}{%
    Neural networks, trained with different objectives on different data and modalities, are converging to a shared statistical model of reality in their representation spaces.}
    \vspace{3pt}
    \includegraphics[width=0.85\linewidth]{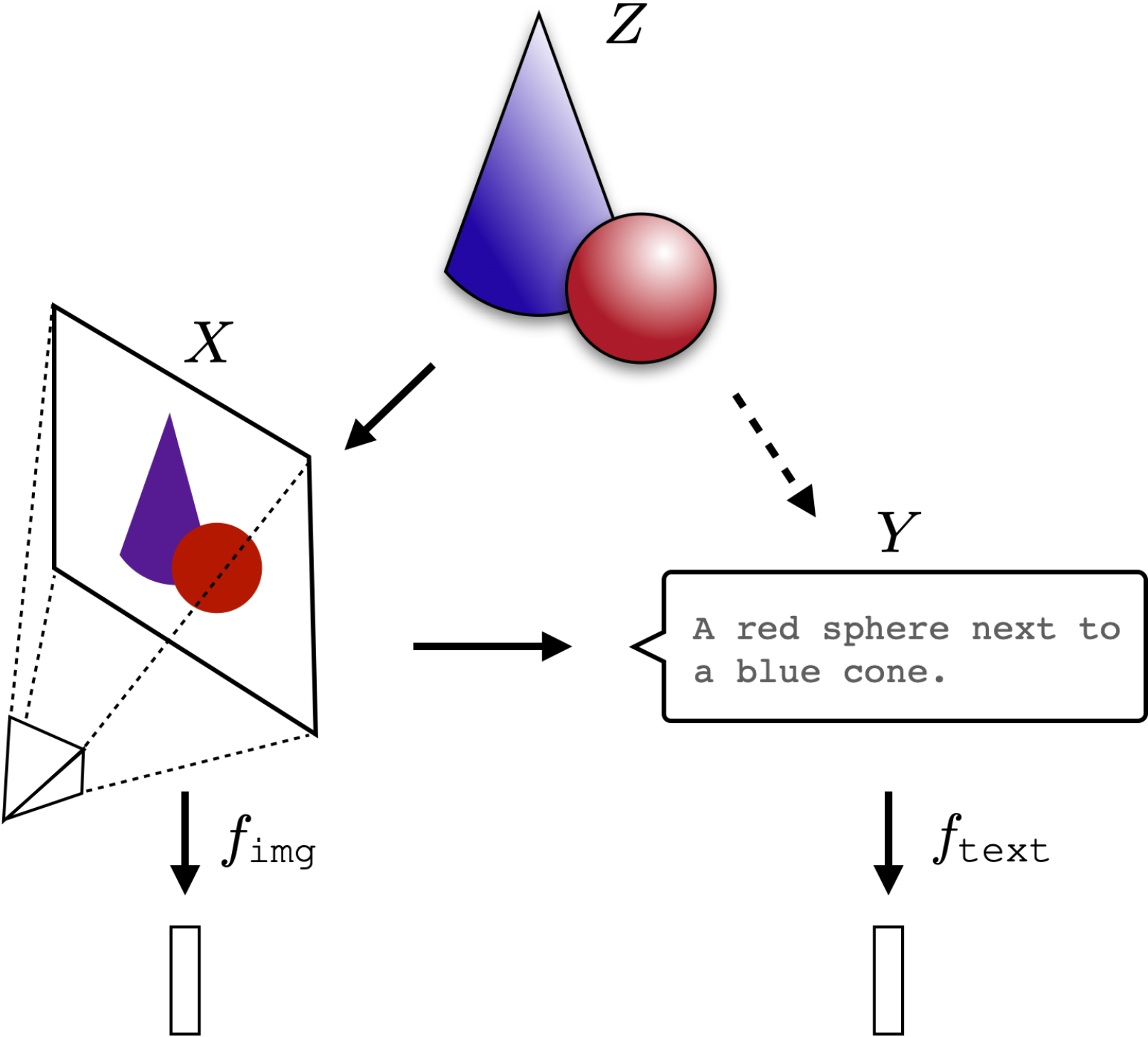}\vspace*{-5.5pt}
    \caption{\small \textbf{The Platonic Representation Hypothesis:} Images ($X$) and text ($Y$) are projections of a common underlying reality ($Z$). We conjecture that representation learning algorithms will converge on a shared representation of $Z$, and scaling model size, as well as data and task diversity, drives this convergence. 
    }\label{fig:platonic_rep}
\end{figure}

This paper explores one aspect of this trend: representational convergence. We argue that there is a growing similarity in how datapoints are represented in different neural network models. This similarity spans across different model architectures, training objectives, and even data modalities.

What has led to this convergence? Will it continue? And ultimately, where does it end?

Our central hypothesis, stated above in \Cref{fig:platonic_rep}, is that there is indeed an endpoint to this convergence and a principle that drives it: different models are all trying to arrive at a \textit{representation of reality}, meaning a representation of the joint distribution over events in the world that generate the data we observe. \Cref{fig:platonic_rep} conveys this hypothesis: there exists a real world (labeled $Z$), which we measure with various sensors, such as the camera shown to the left ($X$). Other \textit{projections} of these measurements, such as the textual description shown, can be produced from the first set of measurements or mediated by some other set of measurements, \eg, touch or other camera views (dotted arrow from $X$ to $Y$)\footnote{Touch could convey the shapes in this example but not the colors. This is an important limitation to our hypothesis that we discuss at several points in the paper: different sensors and views might capture different information, which may limit their potential to converge to identical representations.
}. 
Representation learning algorithms find vector embeddings that statistically model the various measurements and projections. The resulting vector embeddings are all derived from the underlying reality in $Z$ and thereby become aligned. As models are trained on more data and for more tasks, they require representations that capture more and more information about $Z$, and hence alignment toward $Z$ increases toward a convergent point as a function of scale.

We call this converged hypothetical representation the ``platonic representation'' in reference to Plato's Allegory of the Cave~\cite{plato_cave}, and his idea of an ideal reality that underlies our sensations. The training data for our algorithms are shadows on the cave wall, yet, we hypothesize, models are recovering ever better representations of the actual world outside the cave. This idea is not unique to Plato; our hypothesis is also related to the notion of ``convergent realism''~\cite{newton1981rationality,putnam1982three,doppelt2007reconstructing,hardin1982defense} in the philosophy of science (\ie, that science is converging on truth), and to many arguments that have been put forth in the representation learning literature (\eg, \citet{tian2020contrastive,zimmermann2021contrastive,richens2024robust,cao2021explanatory}).


Also closely related to our hypothesis is the ``Anna Karenina scenario'' described by \citet{bansal2021revisiting}, referring to the possibility that all well-performing neural nets represent the world in the same way. We discuss the evidence they give for this possibility in \Cref{sec:reps_are_converging}\footnote{Borrowed from \citet{tolstoy1877anna}, similar analogies have been made in other domains, such as the ``Anna Karenina principle'' popularized by \citet{diamond1998guns} to explain animal domestication.}. The platonic representation hypothesis refers to the situation where we are in an Anna Karenina scenario \textit{and} the ``happy representation'' that is converged upon is one that reflects a statistical model of the underlying reality. We discuss the potential nature of this statistical model in more detail in \Cref{sec:what_rep}.

\vspace{-3pt}
\section{Representations are converging}\label{sec:reps_are_converging}

\paragraph{Preliminaries}
We restrict our attention to representations that are \textit{vector embeddings}. We characterize such a representation by the similarity structure it induces, referred to as its kernel. Kernels are commonly used to assess representations~\cite{kornblith2019similarity, klabunde2023similarity}; this can be justified by the fact that they capture the relative structures among data samples, which are also the learning signal for many machine learning algorithms ~\cite{aronszajn1950theory,smola1998learning}. Following prior literature, we define \textit{representational alignment} as a measure of the similarity of the similarity structures induced by two representations, \ie, a similarity metric over kernels.
We give the mathematical definition of these concepts below:
\begin{itemize}[topsep=-1.5pt,itemsep=-1.5pt,leftmargin=10pt]
    \item A \textbf{representation} is a function $f\colon \mathcal{X} \rightarrow \mathbb{R}^n$ that assigns a feature vector to each input in some data domain $\mathcal{X}$. 
    \item A \textbf{kernel}, $K\colon \mathcal{X} \times \mathcal{X} \rightarrow \mathbb{R}$, characterizes how a representation measures distance/similarity between datapoints. $K(x_i,x_j) = \langle f(x_i), f(x_j) \rangle$, where $\langle {{}\cdot{}},{{}\cdot{}}\rangle$ denotes inner product, $x_i, x_j \in \mathcal{X}$ and $K \in \mathcal{K}$.
    \item A \textbf{kernel-alignment metric}, $m\colon \mathcal{K} \times \mathcal{K} \rightarrow \mathbb{R}$, measures the similarity between two kernels, \ie, how similar is the distance measure induced by one representation to the distance measure induced by another. Examples include Centered Kernel Distance (CKA)~\cite{kornblith2019similarity}, SVCCA~\cite{raghu2017svcca}, and nearest-neighbor metrics~\cite{klabunde2023similarity}.
\end{itemize}

In our experiments, we use a \emph{mutual nearest-neighbor metric} that measures the mean intersection of the $k$-nearest neighbor sets induced by two kernels, $K_1$ and $K_2$, normalized by $k$.
This metric is a variant of those proposed in~\citet{park2024quantifying}, \citet{klabunde2023similarity} and \citet{oron2017best}.
See~\app{sec:align-metric} for the exact definition and~\app{app:other-metrics} for comparisons with alternative alignment metrics.

Next, we explore several ways in which representations are converging. First, we argue that different neural networks are converging to aligned representations. Then, we show that this continues to hold across modalities, where image embeddings in vision models align with text embeddings in language models.

\begin{figure*}[ht!]
    \vspace{-2pt}
    \begin{minipage}[t]{0.65\textwidth}
        \hspace{-0.1in}
        \raisebox{-\height}{\includegraphics[width=0.47\linewidth, trim=0 0 0 0]{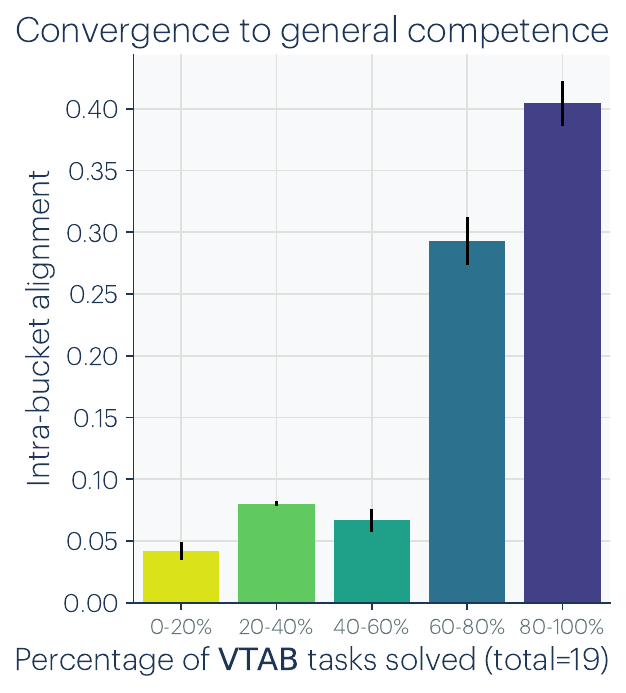}}
        \raisebox{-\height}{\includegraphics[width=0.504\linewidth, trim=5 0 11 0]{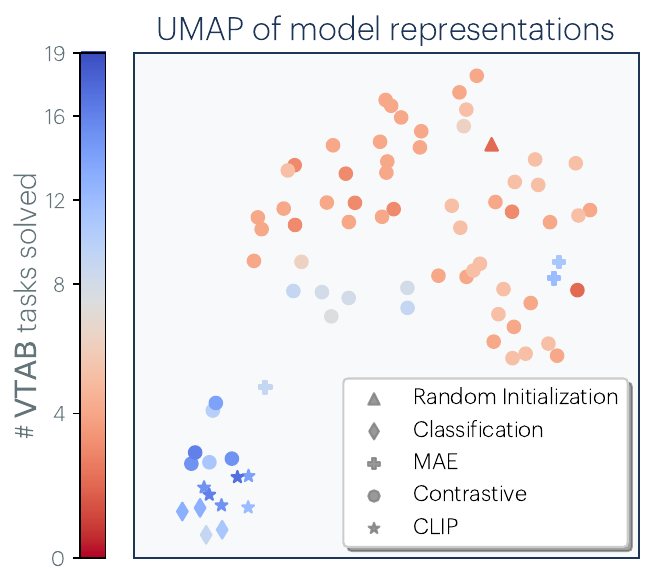}}
        \hfill
    \end{minipage}%
    \hfill
    \begin{minipage}[t]{0.35\textwidth}
        \vspace{-0.075in}
        \caption{%
            \small \textbf{VISION models converge as COMPETENCE increases:} We measure alignment among $78$ models using mutual nearest-neighbors on Places-365 \cite{zhou2017places}, and evaluate their performance on downstream tasks from the Visual Task Adaptation Benchmark (VTAB; \citet{zhai2019vtab}). \textbf{LEFT:} Models that solve more VTAB tasks tend to be more aligned with each other. Error bars show standard error. \textbf{RIGHT:} We use UMAP to embed \emph{models} into a 2D space, based on $\mathsf{distance} \triangleq -\log (\mathsf{alignment})$. More competent and general models (blue) have more similar representations.}
        \label{fig:vm_align}
    \end{minipage}
    \vspace{-16pt}
\end{figure*}

\subsection{Different models, with different architectures and objectives, can have aligned representations}


One indication of representational convergence is the rising number of systems built on top of pre-trained foundation models. These models are becoming standard backbones across a growing spectrum of tasks. Their versatility across numerous applications implies a level of universality in the way they represent data. 

While this trend implies convergence toward a relatively small set of foundation models, it does not imply that \textit{different} foundation models will arrive at the same representation. Yet that is what has been observed by several recent papers. 

\citet{lenc2015understanding} conducted one such study, in which they measured representational similarity through a technique called \textit{model stitching}. Given two models, $f$ and $g$, each composed of multiple layers ($ f = f_1 \circ \cdots \circ f_n $, $ g = g_1 \circ \cdots \circ g_m $), an intermediate representation from $f$ is integrated into $g$ via a learned affine stitching layer $ h $, resulting in a new stitched model $F = f_1 \circ \cdots \circ f_k \circ h \circ g_{k+1} \circ \cdots \circ g_m $. 
If $F$ has good performance, it indicates that $f$ and $g$ have compatible representations at layer $k$, up to the  transform $h$.


In their study,~\citet{lenc2015understanding} made two notable findings: (1) A vision model trained on ImageNet~\cite{russakovsky2015imagenet} can be aligned with a model trained on Places-365~\citep{zhou2017places} while maintaining good performance; (2) The early layers of these convolutional networks are more interchangeable than later layers. The first finding illustrates a level of data independence where distinct image datasets lead to similar representations. The second finding agrees with extensive research that oriented Gabor-like filters are common in both artificial and biological vision systems. This suggests a convergence to a similar initial layer of representation across various neural network architectures~\citep{olshausen1996emergence, krizhevsky2017imagenet}.
\citet{bansal2021revisiting} expanded on the idea of model stitching, showing that models trained using self-supervised objectives align closely with their supervised counterparts.



\citet{moschella2022relative} further demonstrated the feasibility of ``zero-shot'' model stitching without learning a stitching layer.
Despite the fact that different text models were trained on different modalities, they found that the models often
embed data in remarkably similar ways. In particular, they considered the kernel $K$ defined by learned representations and showed that $K$ serves as a bridge between models, allowing an encoder trained in one language, like English, to work effectively with a decoder in another, like French. 

\citet{dravid2023rosetta} extended this idea to individual neurons, and found ``Rosetta Neurons'' that are activated by the same pattern across a range of vision models. Such neurons form a common dictionary independently discovered by all models.

\begin{figure*}[t!]
    \centering
    
    \includegraphics[width=1.0\linewidth, trim=0 0 0 16]{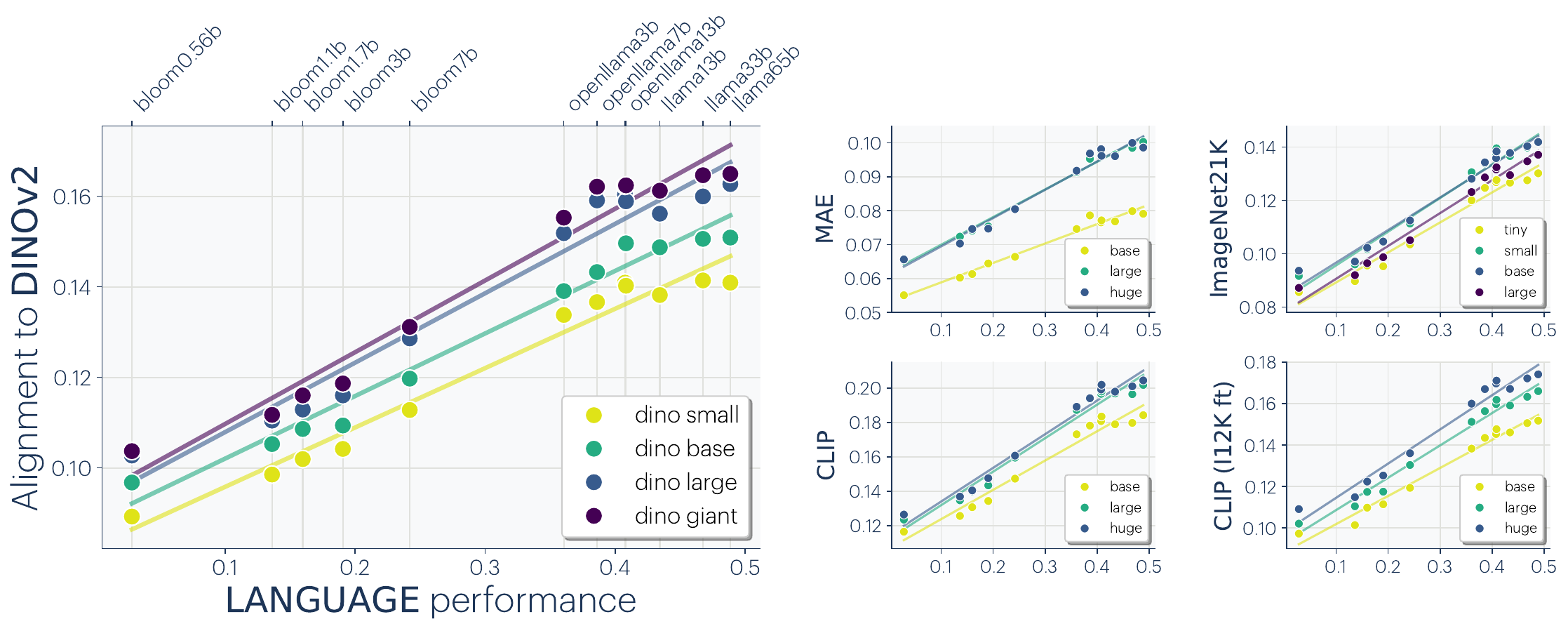}
    \vspace{-22pt}
    \caption{\small\textbf{LANGUAGE and VISION models align:} 
    We measure alignment using mutual nearest-neighbor on the Wikipedia caption dataset~(WIT)~\cite{srinivasan2021wit}. The x-axis is the language model performance measured over 4M tokens from the OpenWebText dataset~\cite{Gokaslan2019OpenWeb} (see \app{app:other-metrics} for plots with model names). 
    We measure performance using $1 - \texttt{bits-per-byte}$, where $\texttt{bits-per-byte}$ normalizes the cross-entropy by the total bytes in the input text string.
    The results show a linear relationship between language-vision alignment and language modeling score, where a general trend is that more capable language models align better with more capable vision models. 
    We find that CLIP models, which are trained with explicit language supervision, exhibit a higher level of alignment. However, this alignment decreases after being fine-tuned on ImageNet classification (labeled CLIP (I12K ft)).
    }%
    \label{fig:alignment_comparisons}
    \vspace{-4pt}
\end{figure*}

\subsection{Alignment increases with scale and performance}

\citet{kornblith2019similarity} and \citet{roeder2021linear} observed model alignment not only exists but also increases with model scale and dataset size. On CIFAR-10 classification, \citet{krizhevsky2009learning} found that larger models exhibit greater alignment with each other compared to smaller ones. 
Theoretically, \citet{balestriero2018spline} showed that models with similar outputs (\eg, as a result of having high performance) also have similar internal activations. 
With the continuing trend of models scaling up, this suggests model alignment will increase over time -- we might expect that the next generation of bigger, better models will be even more aligned with each other.
%

We expand upon this observation by evaluating the transfer performance of $78$ vision models. These models were trained with varying architectures, training objectives, and datasets~(detailed in~\Cref{sec:vision-vision-details}). In~\Cref{fig:vm_align} (left), we bin these models based on their average transfer performance on the VTAB dataset~\cite{zhai2019vtab}, and then measure the average kernel alignment of the models within each bin. 
The results indicate that models with high transfer performance form a tightly clustered set of representations, while models with weak performance have more variable representations. We further visualize this structure with UMAP~\citep{mcinnes2018umap} over models representation in~\Cref{fig:vm_align} (right). This suggests that models that are competent all represent data in a similar way. Echoing \citet{bansal2021revisiting} and \citet{tolstoy1877anna}, we might say: all strong models are alike, each weak model is weak in its own way.

The discussion so far indicates that various models are aligning toward a unified representation. But does the convergence extend to model weights? While models with different architectures might not have compatible weight spaces, there exists ample evidence that models with the same architecture will often converge to the same basin of weights~\cite{nagarajan2019uniform,garipov2018loss,lubana2023mechanistic}. This holds even for models with different initializations, up to permutations over weight space~\citep{ainsworth2022git}. Because of this, it is possible to merge separately trained models of the same architecture, and achieve some of the capabilities of all models in the mixture~\cite{stoica2023zipit,jordan2022repair,wortsman2022model}.

\begin{figure*}[ht]
    \centering
    \includegraphics[width=0.5\linewidth]{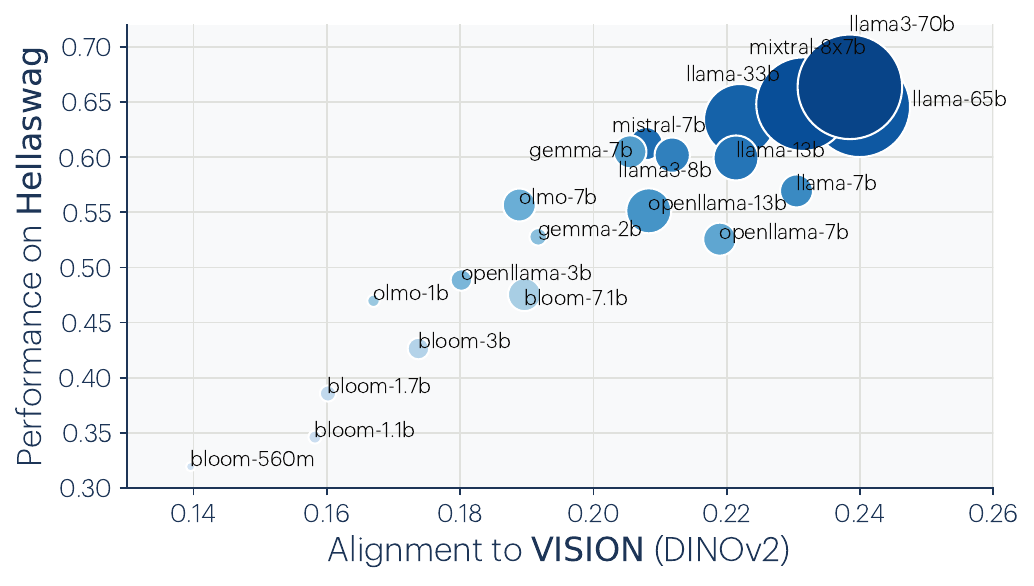}%
    \hfill%
    \includegraphics[width=0.5\linewidth]{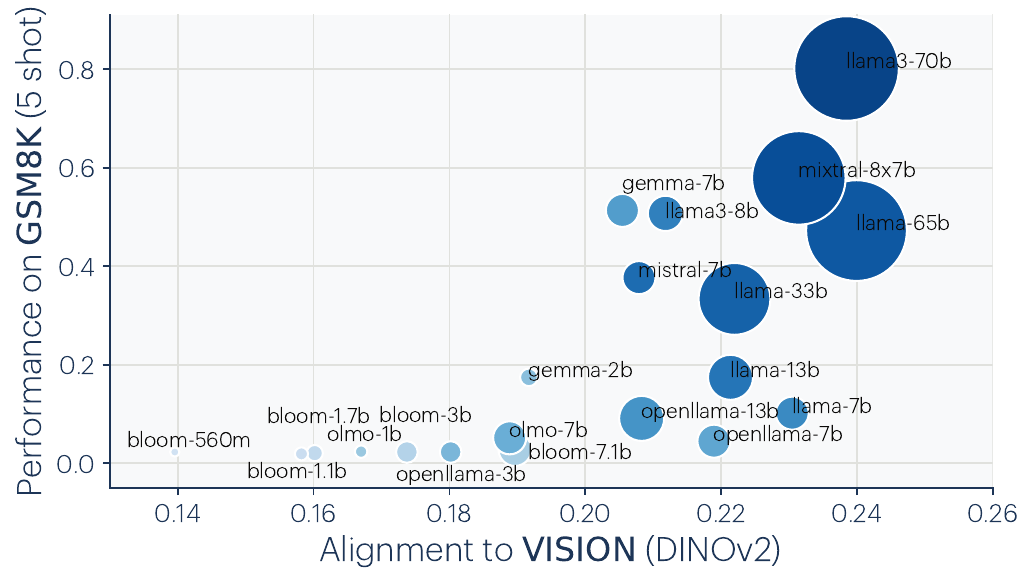}\\[-0.15in]
    \caption{\small\textbf{Alignment predicts downstream performance:} We visualize correlation between LLM alignment score to DINOv2~\cite{oquab2023dinov2} and downstream task performance on Hellaswag~(common-sense)~\cite{zellers2019hellaswag} and GSM8K~(math)~\cite{cobbe2021gsm8k}. LLMs are plotted with radii proportional to the size of the model, and color-coded by their rank order in language modeling scores ($1 - \texttt{bits-per-byte}$). We observe that models aligned more closely with vision also show better performance on downstream language tasks. For Hellaswag, there is a linear relationship with alignment score, while GSM8K exhibits an ``emergence''-esque trend. 
    } 
    \label{fig:downstream}
\end{figure*}

\subsection{Representations are converging across modalities}
Do models trained on different data modalities also converge?
Several works indicate that the answer is \emph{yes}. 

\citet{merullo2022linearly} extended model stitching to the cross-modal setting, finding that a single linear projection is sufficient to stitch a vision model to an LLM and achieve good performance on visual question answering and image captioning. \citet{koh2023grounding} showed that linear stitching can also work in the opposite direction, aligning text inputs to visual outputs. 
In fact, many recent language-vision models stitch pre-trained language and vision models together. For example, LLaVA~\cite{liu2023llava} demonstrated state-of-the-art results by projecting visual features into a language model with a 2-layer MLP.

Other works show further kinds of evidence of cross-modal synergy. \citet{achiam2023gpt} found that jointly training a language model with a vision model improves performance on language tasks, compared to training the language model on its own. \citet{sorscher2022neural} show a setting in which word embeddings of visual concept names can be isometrically mapped to image embeddings for those same concepts. In work concurrent to ours, \citet{maniparambil2024vision} show well-trained vision encoders on large datasets exhibit high semantic similarity with language encoders regardless of the training paradigm (supervised, self-supervised, or language-supervised). \citet{sharma2024vision} probed the visual knowledge of LLMs trained \textit{only} on language data, by converting images into code that an LLM can process. They found that LLMs have rich knowledge of visual structures, to the extent that decent visual representations can be trained on images generated solely by querying an LLM to produce code and rendering the response. In visual generation, LLMs show abilities to augment captions with visual structures (\eg, bounding boxes) and improve generation quality \citep{betker2023improving,lian2023llm,lian2023llmvideo,wu2023self}. Over other modalities, \citet{ngo2024language} showed auditory models are also roughly aligned with LLMs up to a linear transformation, and \citet{ng2023can} demonstrated the effectiveness of using pre-trained LLMs for facial motion prediction.

We set out to address these claims in a broader scope to determine whether models are indeed learning an increasingly modality-agnostic representation of the world. We sampled a variety of models trained either solely on vision or solely on language, and compared their representations as they became larger and more competent over many tasks.

In~\Cref{fig:alignment_comparisons}, we assess alignment between a suite of language models and vision models. So far we have only defined alignment for two kernels defined over the same input space. To measure cross-modal alignment, we use paired datasets to bridge the two modalities. For vision and text, we use the Wikipedia captions dataset $\{(x_i, y_i)\}_i$~\cite{srinivasan2021wit}, composed of images from Wikipedia ($x_i$) and their corresponding captions ($y_i$). We then measure alignment between a language model $f_{\texttt{text}}$ and a vision model $f_{\texttt{img}}$ as the alignment of the two following kernels:
\begin{align}
    K_\texttt{img}(i, j) = \langle f_{\texttt{img}}(x_i), f_{\texttt{img}}(x_j) \rangle\\
    K_\texttt{text}(i, j) = \langle f_{\texttt{text}}(y_i), f_{\texttt{text}}(y_j) \rangle.
\end{align}
Using this analysis, we find that the better an LLM is at language modeling, the more it tends to aligns with vision models, as shown in \Cref{fig:alignment_comparisons}. The converse effect also holds: the better a vision models is, the more it tends to align with LLMs. See \Cref{sec:vision-language-details} for more details.

\subsection{Models are increasingly aligning to brains}
\label{sec:models-and-minds}

Neural networks also show substantial alignment with biological representations in the brain~\cite{yamins2014performance}. This commonality may be due to similarities in the task and data constraints both systems are confronted with.
Even though the mediums may differ -- silicon transistors versus biological neurons -- the fundamental problem faced by brains and machines is the same: efficiently extracting and understanding the underlying structure in images, text, sounds, \etc~\cite{barlow1961possible, olshausen1997sparse}. \citet{sorscher2022neural} developed a theoretical framework for how the efficient extraction of novel concepts occurs for both the human visual system and deep networks. The tasks that the human visual system has been honed to perform through evolution -- like segmentation, detection, and whole-image classification -- are also the ones that we train our neural nets to perform. \citet{yamins2014performance} went as far as to title their work in the spirit that performance over such tasks implies brain alignment. \citet{antonello2024predictive} posited that it is less the particular task and more the generality of the representations that explain their alignment with biological representations. Further, \citet{conwell2022can} showed that training data plays a large role in alignment. Psychophysical studies have also shown agreement between how humans perceive visual similarity and how models do, even when the models are trained on tasks, such as self-supervised prediction, that are seemingly unrelated to mimicking human perception~\cite{zhang2018unreasonable}.


\subsection{Does alignment predict downstream performance?}
If models are converging towards a more accurate representation of reality, we expect that alignment should correspond to improved performance on downstream tasks. \Cref{fig:downstream} supports this hypothesis by demonstrating improved performance on commonsense reasoning (Hellaswag; \citet{zellers2019hellaswag}) and mathematical problem solving (GSM8K; \citet{cobbe2021gsm8k}) as alignment improves. 

\section{Why are representations converging?}\label{sec:what-why-converge}


Modern machine learning models are generally trained to minimize the empirical risk with possible implicit and/or explicit regularization: \begin{equation*}
    {
    \color{gray}
    \overbracket[1pt]{\color{black}f^*}^{\mathclap{\textsf{trained model}}}
    }{}= \mathbox[model]{\argmin}_{f \in {
    \color{gray}
    \underbracket[1pt]{\scriptsize\mathbox[model]{\color{black}\mathcal{F}}}_{\mathclap{\scriptstyle\textsf{function class}}}}
    }\mathbb{E}_{x \sim {\scriptsize\mathbox[task]{\mathsf{dataset}}}}[ {
    \color{gray}
    \overbracket[1pt]{\mathbox[task]{\color{black}\mathcal{L}}}^{\mathclap{\textsf{training objective}}}}(f, x)] + {
    \color{gray}
    \underbracket[1pt]{\mathbox[bias]{\color{black}\mathcal{R}}}_{\mathclap{\textsf{regularization}}}}(f)
\end{equation*}
In the following sections, we lay out how each colored component in this optimization process potentially plays a role in facilitating representational convergence. 

\begin{figure*}[t]
    \centering
    \vspace{3pt}
    \includegraphics[width=0.945\linewidth, trim=0 0 12 0]{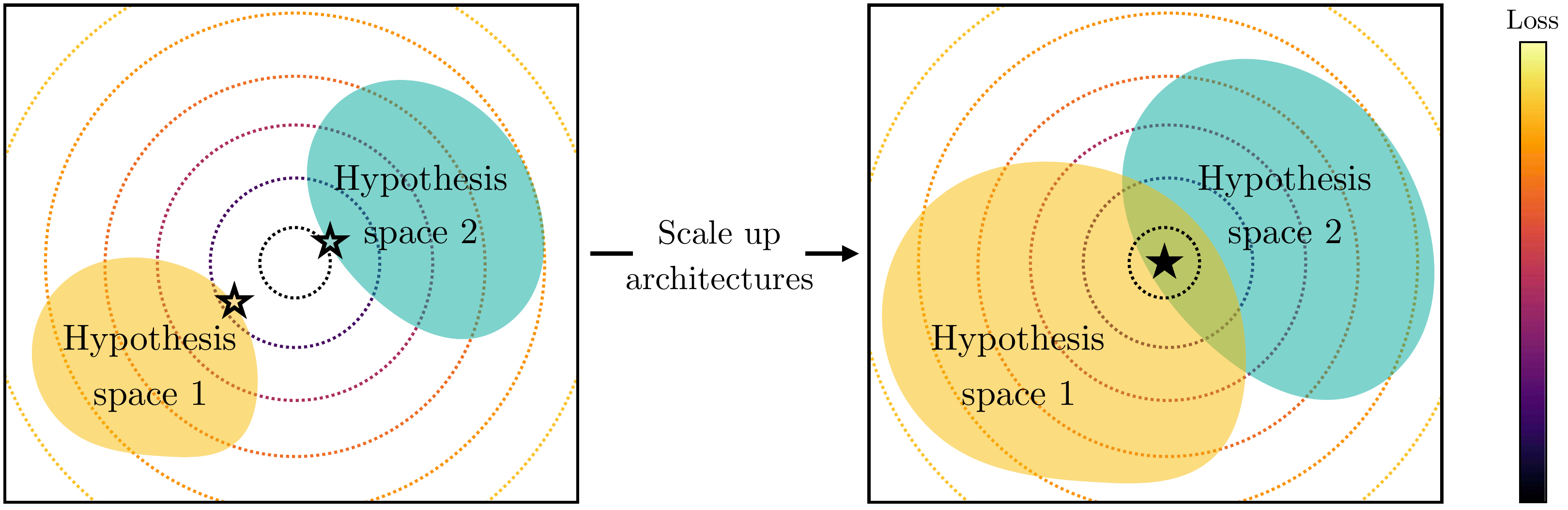}\\[-8pt]
    \caption{\small \textbf{The Capacity Hypothesis:} If an optimal representation exists in function space, larger hypothesis spaces are more likely to cover it. \textbf{LEFT:} Two small models might not cover the optimum and thus find \textit{different} solutions (marked by outlined \scalebox{1.25}{$\bigwhitestar$}). \textbf{RIGHT:} As the models become larger, they cover the optimum and converge to the same solution (marked by filled \scalebox{1.05}{$\bigstar$}).}
    \label{fig:hypothesis_space_overlap}%
    \vspace{-5pt}
\end{figure*}

\begin{figure}[t]
    \centering
    \vspace{3pt}
    \includegraphics[width=0.825\linewidth]{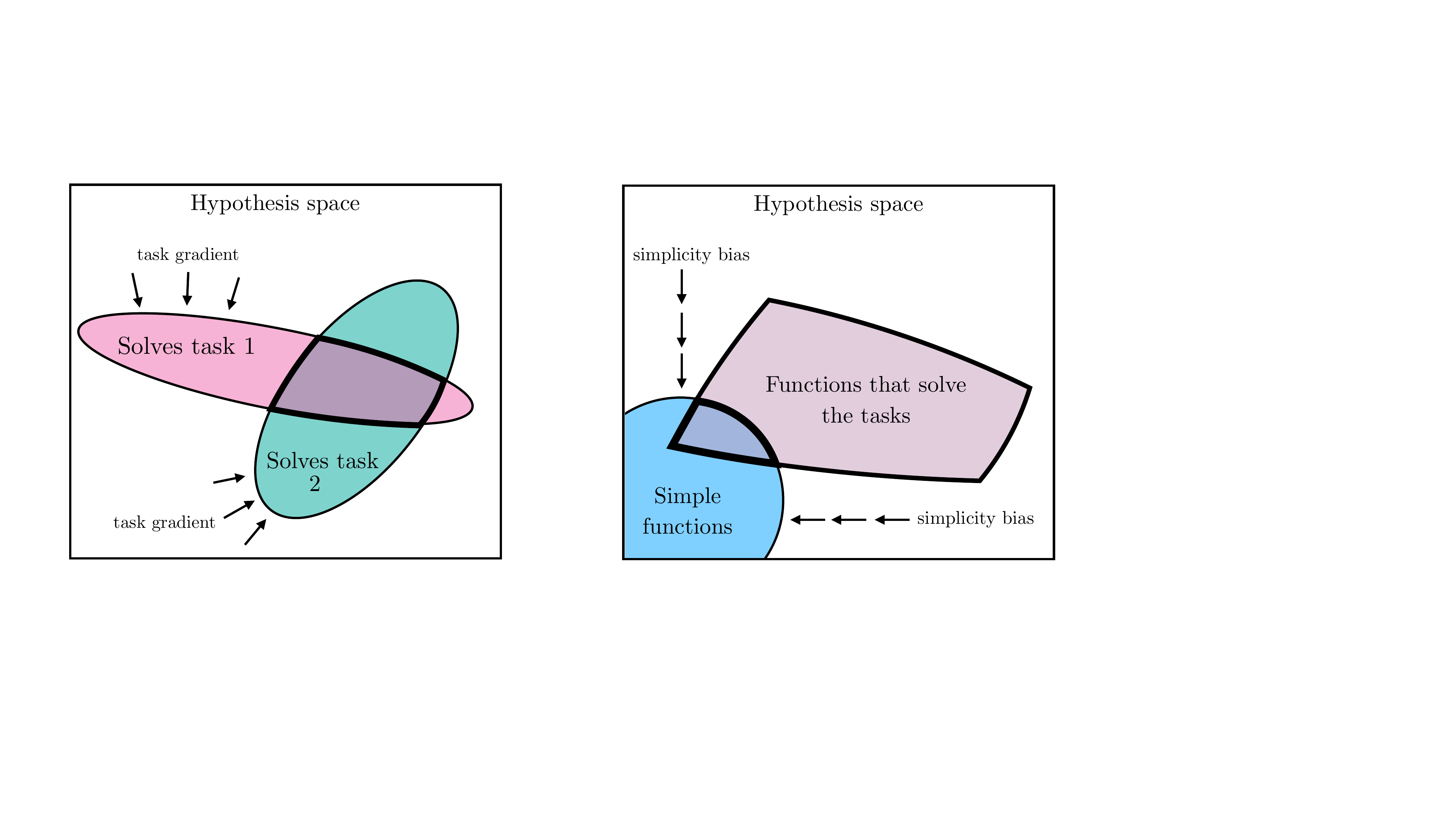}\\[-7pt]
    \caption{\small \textbf{The Multitask Scaling Hypothesis:} Models trained with an increasing number of tasks are subjected to pressure to learn a representation that can solve all the tasks.} \label{fig:multitask_hypothesis}
\end{figure}

\subsection{Convergence via \textbox[task]{Task Generality}}
\label{sec:multitask_scaling_hypothesis}


Each training datapoint and objective (task) places an additional constraint on the model. As data and tasks scale,
the volume of representations that satisfy these constraints must proportionately grow smaller, as visualized in Figure \ref{fig:multitask_hypothesis} and stated below: 
\hypbox{The Multitask Scaling Hypothesis}{%
There are fewer representations that are competent for $N$ tasks than there are for $M<N$ tasks. As we train more general models that solve more tasks at once, we should expect fewer possible solutions.
}

This has been previously termed as the Contravariance principle by~\citet{cao2021explanatory}, which states that the set of solutions to an easy goal is large, while the set of solutions to a challenging goal is comparatively smaller. Moreover, we argue that this narrower solution set also generalizes better. As data scales, models that optimize the empirical risk $\mathbb{E}_{x \sim {\scriptsize \mathbox[task]{\mathsf{dataset}}}}[ {{{\mathcal{L}}}}(f, x)]$ also improve on the population risk $\mathbb{E}_{x \sim {\scriptsize \mathbox[task]{\mathsf{reality}}}}[ {{{\mathcal{L}}}}(f, x)]$, and become better at capturing statistical structures of the true data generating process ($\mathsf{reality}$).

Recent work has demonstrated a power law relationship between data scale and model performance~\cite{hestness2017deep}. This implies that with enough data (\eg, consisting of the entire internet and all offline scientific measurements) one ought to converge to a very small solution set with irreducible error -- the inherent epistemic uncertainty of the world. 
As more models are trained on internet-scale data, the set of solutions that satisfies all data constraints must become relatively small. 

In addition to data-scaling, many modern representation learning objectives $\mathbox[task]{\mathcal{L}}(f, x)$ directly optimize for multi-task solving. Contrastive learning finds a distance structure over data samples that optimizes many classification tasks \citep{arora2019theoretical,tongzhouw2020hypersphere,tian2020rethinking}. Masked Autoencoders \citep{he2021masked} optimize randomly sampled reconstruction tasks. In fact, autoregressive language modeling can also be seen as optimizing a diverse set of tasks \citep{radford2019language}. Such multi-task objectives may be more effective than single-task ones (\eg, ImageNet classification) due to the fact that they impose more task constraints on the representation, leading to a smaller and higher-quality solution space \citep{chen2020simple,he2020momentum,radford2017learning,radford2019language}.




\subsection{Convergence via \textbox[model]{Model Capacity}}\label{sec:capacity_hypothesis}


Suppose there is a globally optimal representation for standard learning objectives. Then, under sufficient data, \textit{scaling} a model (\ie, using larger function classes 
\mathbox[model]{\mathcal{F}}), as well as \textbox[model]{improved optimization}, should be more effective at finding better approximations to this optimum, as illustrated in \Cref{fig:hypothesis_space_overlap}. 
With the same training objective, larger models, even of different architectures, will thus tend to converge toward this optimum. When different training objectives share similar minimizers, larger models are better at finding these minimizers, 
and will train to similar solutions over the training tasks. We summarize this hypothesis as follows:
\hypbox{The Capacity Hypothesis}{%
Bigger models are more likely to converge to a shared representation than smaller models. 
}



\subsection{Convergence via \textbox[bias]{Simplicity Bias}}\label{sec:simplicity_bias_hypothesis}

Arriving at the same mapping on the \textit{training data} does not prohibit the models from developing distinct internal representations. It is not unreasonable to posit that the representations used to detect a dog in a 1M parameter model could be quite different than that used by a 1B parameter model. What would stop a billion-parameter (and counting) model from learning an overly complicated and distinct representation? One key factor might be simplicity bias:

\hypbox{The Simplicity Bias Hypothesis}{
Deep networks are biased toward finding simple fits to the data, and the bigger the model, the stronger the bias. Therefore, as models get bigger, we should expect convergence to a smaller solution space.}




\vspace{2pt}
Such simplicity bias could be coming from explicit regularization $\mathbox[bias]{\mathcal{R}(f)}$ commonly used in deep learning (\eg, weight decay and dropout). However, even in the absence of external influences, deep networks naturally adhere to Occam's razor, \textbox[bias]{implicitly favoring simple solutions}that fit the data~\cite{solomonoff1964formal,gunasekar2018implicit,arora2019implicit,valle2018deep, huh2023simplicitybias,dingle2018input, goldblum2023no}. 
Figure \ref{fig:simplicity_hypothesis} visualizes how simplicity bias can drive convergence.

\begin{figure}[t]
    \centering
    \vspace{3pt}
    \includegraphics[width=0.825\linewidth]{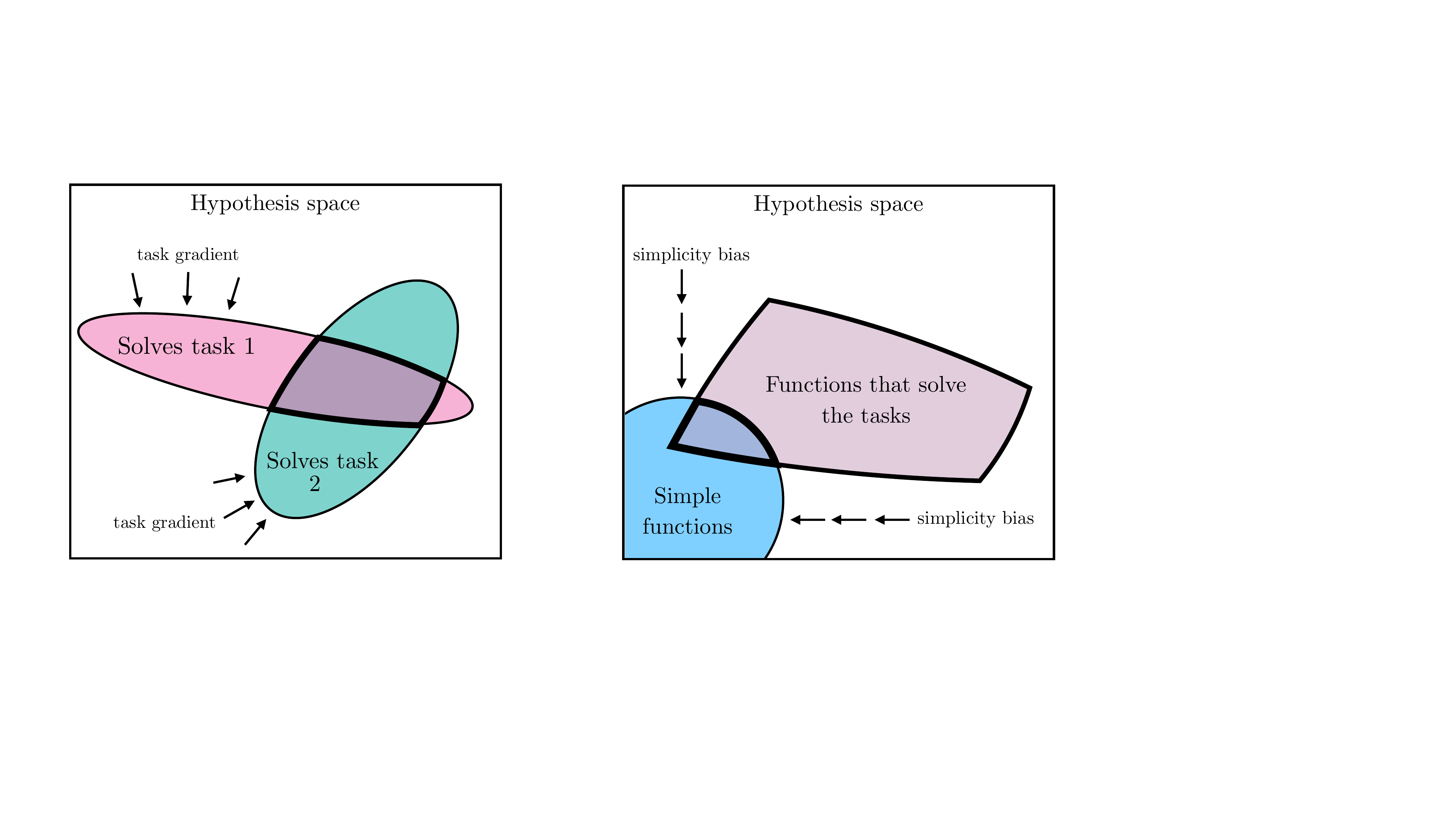}\\[-7pt]
    \caption{\small \textbf{The Simplicity Bias Hypothesis:} Larger models have larger coverage of all possible ways to fit the same data. However, the implicit simplicity biases of deep networks encourage larger models to find the simplest of these solutions.}
    \label{fig:simplicity_hypothesis}
    \vspace{2pt}
\end{figure}
\section{What representation are we converging to?}\label{sec:what_rep}


By now, we hope to have convinced the reader that task and data pressures, combined with increasing model capacity, can lead to convergence. We next turn our attention to \textit{what} exactly is the endpoint of all this convergence. 

Our central hypothesis, stated in~\Cref{fig:platonic_rep}, is that the representation we are converging toward is a statistical model of the underlying reality that generates our observations. Consistent with the multitask scaling hypothesis, such a representation would naturally be useful toward many tasks (or at least toward any task grounded in reality). Additionally, this representation might be relatively simple, assuming that scientists
are correct in suggesting that the fundamental laws of nature are indeed simple functions \citep{gell1995quark}, in line with the simplicity bias hypothesis.

But what exactly do we mean by ``a statistical model of the underlying reality.'' In this section, we formalize one definition with concrete mathematical statements. \emph{Importantly}, this section should be read as just one concrete candidate for the form of the platonic representation; other candidates could be arrived at from other modeling assumptions. 



\subsection{An idealized world}
We consider a world that works as follows, consistent with the cartoon in \Cref{fig:platonic_rep}. The world consists of a sequence of $T$ discrete events, denoted as $\mathbf{Z} \triangleq [z_1, \ldots, z_T]$, sampled from some unknown distribution $\mathbb{P}(\mathbf{Z})$. Each event can be observed in various ways. An observation is a bijective, deterministic function $\texttt{obs}: \mathcal{Z} \rightarrow \cdot{}\,$ that maps events to an arbitrary measurement space, such as pixels, sounds, mass, force, torque, words, etc. 
Later, in \Cref{sec:limitations}, we discuss limitations and potential extensions to continuous and unbounded worlds, and stochastic observations, that could yield a model that better reflects real learning scenarios.

One can think of an event as corresponding to the state of the world at some point in time\footnote{Here we only analyze temporal sequences, but note that the same could be done with respect to events laid out in space instead.}, but it is also fine to simply consider an event as any variable that indexes observations, with no further physical meaning\footnote{This latter interpretation may be more consistent with Plato's intent. Scholars have argued that his allegory of the cave rejects any notion of a true world state~\cite{nettleship1897lecturesplato}. Instead, we could say that the joint distribution of observation indices is \textit{itself} the platonic reality.}. 

In this idealized world, knowing $\mathbb{P}(\mathbf{Z})$ would be useful for many kinds of predictions; this would constitute a world model over the events that cause our observations~\citep{werbos1987learning,ha2018world,richens2024robust}. We will next show that a particular representation of $\mathbb{P}(\mathbf{Z})$ is recovered by certain contrastive learners.

\begin{figure*}[ht]
    \centering
    \includegraphics[width=0.99\linewidth,trim=174 352 188 295,clip]{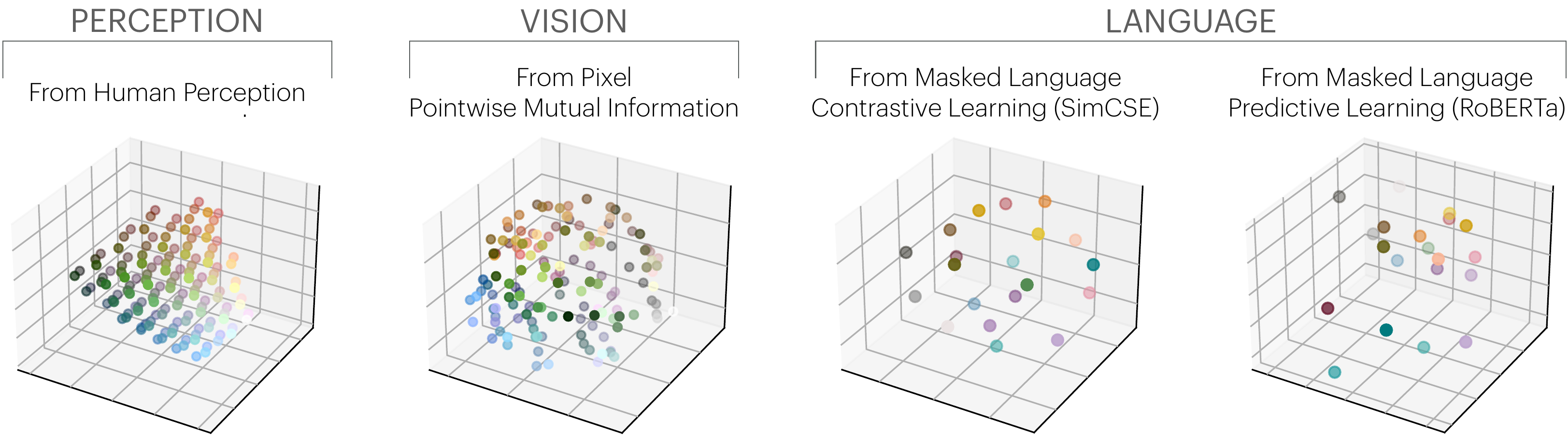}\\[-5pt]
    \caption{\small\textbf{Color cooccurrence in VISION and LANGUAGE yields perceptual organization:} Similar representations of color are obtained via, \textbf{from LEFT to RIGHT}, the perceptual layout from CIELAB color space,  cooccurrence in CIFAR-10 images, and language cooccurrence modeling (\citet{gao2021simcse,liu2019roberta}; computed roughly following \citet{abdou2021can}). Details in \Cref{sec:color_cooccurrences}.
    }
    \label{fig:color_pAB}
\end{figure*}

\subsection{A family of contrastive learners converge to a representation of $\mathbb{P}(\mathbf{Z})$}
\label{sec:simple-contra-kpmi}

Consider a contrastive learner that models observations that \textit{cooccur} together. For simplicity, we ground our discussion with the following definition of the \textit{cooccurrence probability}, $\Pco$, of two observations $x_a$ and $x_b$ both occurring within some  window $T_\mathsf{window}$: 
\begin{align}
    \Pco(x_a, x_b) \hspace{0.1in} \propto \hspace{-0in} \sum_{(t, t') \colon \abs{t-t'} \leq T_\mathsf{window}} \hspace{-0.2in} \mathbb{P}(X_t = x_a, X_{t'} = x_b).\nonumber
\end{align}
Analogously, we can define $\Pco$ for $\mathbf{Z}$ and other observation modalities. Note that $\Pco$ is symmetric.

Consider \emph{positive pairs} as two observations nearby in time (sampled from $\Pco$) and \emph{negative pairs} as observations drawn from any point in time (sampled independently from the marginal). Our contrastive learner tries to classify if a pair is positive or negative by learning a representation $f_X \colon X \rightarrow \mathbb{R}^d$ such that the dot-product kernel approximates the log odds ratio up to some offset:
\begin{align}
    \langle f_X(x_a), f_X(x_b) \rangle 
    & \approx \log \frac{\mathbb{P}(\texttt{pos} \given x_a, x_b)}{\mathbb{P}(\texttt{neg} \given x_a, x_b)} + \tilde{c}_X(x_a) \\
    & = \log \frac{\Pco(x_a \given x_b)}{\Pco(x_a)} + c_X(x_a) \\
    & =  \Kpmi(x_a, x_b) + c_X(x_a), \label{eqn:contr-pmi}
\end{align}
where $\Kpmi$ is the pointwise mutual information (PMI) kernel, and $c_X(x_a)$ is constant in $x_b$. We note that this is a common setting for self-supervised contrastive learners with NCE objectives~\cite{gutmann2010noise,oord2018representation}, including SimCLR~\cite{chen2020simple} and SimCSE~\cite{gao2021simcse}.  (See \citet{oord2018representation} and \Cref{sec:analysis_contrastive-pmi} for detailed derivations.)

Under mild conditions that the world is smooth enough (see \Cref{sec:analysis_contrastive-exact-repr}), a choice of $f_X$ can exactly represent $\Kpmi$:
\begin{align}
    \langle f_X(x_a), f_X(x_b) \rangle &= \Kpmi(x_a,x_b) + c_X,
\end{align}
where we observed that $c_X(x_a)$ from \Cref{eqn:contr-pmi} must be a constant since both sides are symmetric.
%

Therefore, the contrastive learners we consider are minimized by a representation $f_X$ whose kernel is $\Kpmi$ (up to a constant offset). With sufficient data and optimization, we will observe convergence to this point.

Thus we have convergence to a representation of the statistics of $X$, but what about $Z$? Recall that our idealized world consists of \textit{bijective} observation functions, which, over discrete random variables, preserve probabilities. So we have:
\begin{align*}
    \Pco(x_a, x_b) &= \Pco(z_a, z_b)\\
    \Kpmi(x_a, x_b) &= \Kpmi(z_a, z_b),
\end{align*}
where we use $\Pco$ and $\Kpmi$ in a modality-agnostic way to emphasize that different modalities share the same these quantities.



All these arguments hold not just for $X$ but also for $Y$ (or any other bijective, discrete modality), implying:
\begin{align}
    \Kpmi(z_a, z_b) 
    & = \langle f_X(x_a), f_X(x_b) \rangle - c_X \\
    & =     \langle f_Y(y_a), f_Y(y_b) \rangle  - c_Y.
\end{align}
Therefore, for any modality in our idealized world, we observe representational convergence to the same kernel, which represents certain pairwise statistics of $\mathbb{P}(\mathbf{Z})$.

This analysis suggests that certain representation learning algorithms may boil down to a simple rule: \textit{find an embedding in which similarity equals PMI}. We note that this idea is consistent with prior works that have used PMI as a similarity measure for clustering in vision and language (\eg, ~\citet{crisp_boundaries, isola_thesis, isola_cooc, chambers2008unsupervised}).

\paragraph{A study in color}
We conduct a case study to verify that convergence does happen on real data.
\citet{abdou2021can} discovered that color distances in learned language representations, when trained to predict cooccurrences in \emph{text} \citep{devlin2018bert}, closely mirror human perception of these distances, which we reproduce in \Cref{fig:color_pAB} with both contrastive and predictive models. Interestingly, they noted an increasing similarity as models scale larger and become better at modeling \emph{text} cooccurrences. In \Cref{fig:color_pAB}, we also learn representations of color based on $\Kpmi$ from cooccurrences in \emph{images}. 
Indeed, learning cooccurrence statistics in either domain recovers roughly the \emph{same} perceptual representation. Details of this experiment are described in \Cref{sec:color_cooccurrences}. 

We believe that our simple model encapsulates essential aspects of complex real-world systems, and offers a path toward understanding the representation that models are converging to---a unified model that is proficient across various domains and modalities, grounded in the statistical properties of the underlying world. \Cref{sec:limitations} further elaborates some limitations.



\section{What are the implications of convergence?}\label{sec:implications}

\paragraph{Scaling is sufficient, but not necessarily efficient} Our arguments are roughly in line with the claim that ``scale is all you need'' to reach high levels of intelligence. We have argued that as resources are scaled (\# parameters, \# datapoints, \# flops), representations are converging, regardless of other modeling choices and even data modality. 
Does this mean that scale is all that matters? Not quite: different methods can scale with different levels of \textit{efficiency} \citep{hestness2017deep,kaplan2020scaling}, and successful methods must still satisfy some general requirements (\eg, be a consistent estimator, model pairwise statistics of $\mathbb{P}(\mathbf{Z})$).

\paragraph{Training data can be shared across modalities} Suppose you have access to $N$ images and $M$ sentences, and want to learn the best representation. If there is indeed a modality-agnostic platonic representation, then \emph{both} image and language data should help find it.
The implication is that if you want to train the best vision model, you should train not just on  $N$ images but also on $M$ sentences. This is already becoming common practice~\cite{achiam2023gpt, radford2021learning}. Many vision models are finetuned from pre-trained LLMs. The other direction is less common, but also is implied by our hypothesis: if you want to build the best LLM, \textit{you should also train on image data}. Indeed, \citet{achiam2023gpt} showed that training on images improved performance on text. 
In theory, there should be some conversion ratio: a pixel is worth $a$ words for training LLMs, and a word is worth $b$ pixels for training vision models.

\paragraph{Ease of translation and adaptation across modalities}
When two representations are aligned, transitioning from one to the other should be a simple function that's easily obtained. Our hypothesis could explain the phenomenon that conditional generation is easier than unconditional~\citep{mirza2014conditional,liu2020selfconditioned,sauer2022styleganxl}, as the data we condition on may have the same platonic structure as the data we are generating. In line with this, recent work has found that representation-conditioning is even easier \citep{RCG2023}. Similarly, representational convergence could act as a bridge that lets us find mappings between domains even without paired data; this may underlie the success of unpaired translation in vision \citep{CycleGAN2017,shi2024diffusion,xie2022unsupervised} and language \citep{tranfeature2017tran,lample-etal-2018-phrase}. 
We emphasize that this doesn't mean that models trained on a single modality (\eg, language) can immediately process raw data from another (\eg, vision). What makes them adaptable to the new modalities is that they share a common modality-agnostic representation, and can readily process \emph{representations} of new modalities. 
Furthermore, this implies that language models would achieve some notion of grounding in the visual domain 
even in the absence of cross-modal data\footnote{
In 1688, William Molyneux asked if a person born blind, upon gaining sight, could distinguish shapes by vision alone~\citep{locke_molyneaux}. Our arguments suggest they could not do so immediately, but after some visual experience, they could easily map shapes to their prior touch-based representations. Empirical data supports this, showing that congenitally blind children given sight can quickly learn these abilities~\citep{held2011newly}. 
}.
%
The primary advantage of cross-modal data could then simply be sample efficiency. 

\paragraph{Scaling may reduce hallucination and bias} 
A prominent shortcoming of current LLMs is their propensity to hallucinate, or output false statements. If models are indeed converging toward an accurate model of reality, and scale powers this convergence, then we may expect hallucinations to decrease with scale. Of course, our hypothesis is conditioned on the training data for future models constituting a sufficiently lossless and diverse set of measurements. This may not come to pass, but it is an implication of our hypothesis worth pointing out. A similar argument can be made about certain kinds of bias. It has been shown that large models can exacerbate existing biases present in their training data~\citep{hall2022systematic}. Our hypothesis implies that, while this may be true, we should expect \textit{larger} models to amplify bias \textit{less}. This does not mean bias will be removed, rather that the model's biases will more accurately reflect the data's biases, rather than exacerbating them.

\section{Counterexamples and limitations}\label{sec:limitations}


\paragraph{Different modalities may contain different information}

One immediate objection to our hypothesis is: what about the information that is unique to a given modality? Can language really describe the ineffable experience of watching a total solar eclipse? Or, how could an image convey the a concept like ``I believe in the freedom of speech,'' which is easy to write in English? Two different models cannot converge to the same representation if they have access to fundamentally different information. 

More precisely, our mathematical argument in \Cref{sec:what_rep} only strictly holds for bijective projections of $\mathbf{Z}$, so that the information in all the projections is equivalent to the information in the underlying world. This will not hold true for either lossy or stochastic observation functions. Nonetheless, similar arguments have been made theoretically and empirically that cooccurrence relations are learned by practical contrastive \citep{tongzhouw2020hypersphere,zimmermann2021contrastive} and predictive learners \citep{papyan2020prevalence,roeder2021linear}.  
\citet{lu2021pretrained} and \citet{mirchandani2023large} also showed that models trained to autoregressively generate text also capture statistical relations in many other modalities, including symbolic reasoning, 
vision, protein folding, and robotics.

\begin{figure}[t!]
    \centering
    \includegraphics[width=\linewidth]{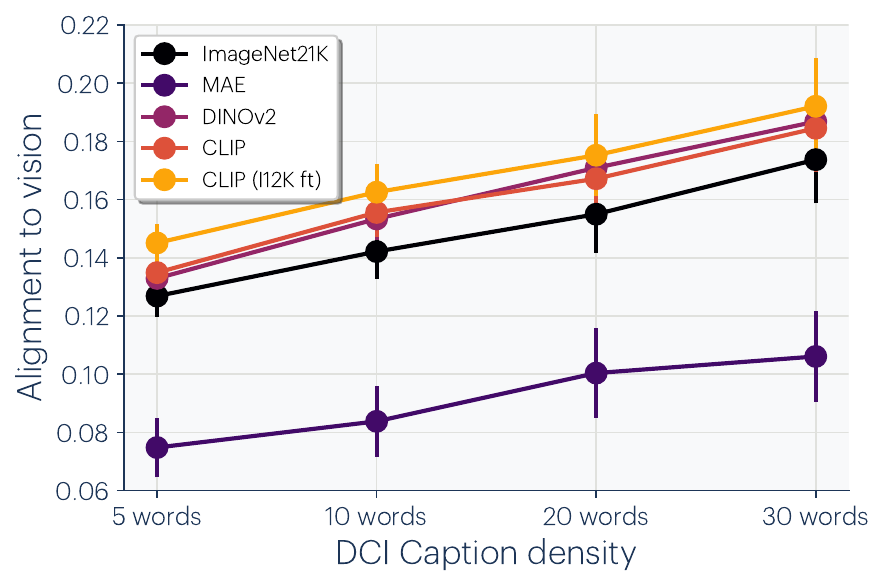}%
    \vspace{-11pt}%
    \caption{\small \textbf{Increasing caption density improves alignment:} We vary caption length using the Densely-Captioned-Images (DCI) dataset~\cite{urbanek2023picture}. Starting from a dense caption, we used LLaMA3-8B-Instruct~\cite{meta2024llama3} to summarize and generate coarse-grained captions. We compute the average alignment score across all vision and language models with standard deviation measured over the language models we evaluated. With denser captions, the mapping may become more bijective, leading to improved language-vision alignment scores.}
    \label{fig:caption_density}
\end{figure}


A more nuanced version of our hypothesis will need to be developed to handle the case of non-bijective observations and abstract concepts. A starting point could be: different models will converge to the same representation \textit{when the input signals are sufficiently high information and the models are sufficiently high capacity}; when they are not, the lower-information representation will only align with the higher-information one up to a level capped by the mutual information between the input signals and by the capacity of each model. This cap might or might not be practically important. Popular representations like CLIP are explicitly optimized to only capture the shared information between vision and language, yet are highly successful on many pure vision tasks. 
We perform a preliminary test of the effect of information level in \Cref{fig:caption_density} (detailed in \Cref{sec:caption_density}), and find that the more descriptive (higher information) a caption is, the better its LLM representation aligns with the visual representation of the corresponding image.

\paragraph{Not all representations are presently converging}

Our argument has mainly focused on two modalities: vision and language. While we do expect other modalities will follow similar trends, we have yet to see the same level of convergence across all domains. For example, in robotics there is not yet a standardized approach to representing world states in the same way as there is for representing images and text. One limitation lies in the hardware used in robotics, which is often expensive and slow. This creates a bottleneck in the quantity and diversity of training data.

\paragraph{Sociological bias in producing AI models}
Researcher bias and collective preferences within the AI community have shaped the trajectory of model development. 
There is often an explicit or implicit goal of designing AI systems that mimic human reasoning and performance, and this could lead to convergence toward human-like representations even if other kinds of intelligence are in fact possible. 
Additionally, the ``hardware lottery''~\cite{hooker2021hardware} suggests that the success of AI models can also depend on the compatibility of their design with available computational architectures, further contributing to convergent trends.

\paragraph{Special-purpose intelligences might not converge} 
Different intelligent systems can be designed to accomplish different tasks. For instance:
A bioinformatics systems might predict protein structure; 
an autonomous vehicle might follow lanes on highways. It's possible that not much is shared between these two narrow tasks. 
Our argument only holds for intelligences that are optimized to perform well on \textit{many} tasks. We have argued that a representation of \textit{reality} is a structure that is useful across many tasks, but for any special purpose there may be shortcuts, or even effective representations detached from reality. Such shortcuts may be more efficient and necessary for continued improvements in specific domains. This will become more relevant if continued scaling comes up against boundary conditions around resources like energy and compute.

\paragraph{
How do we measure alignment?
}
We focused on one particular alignment measure, mutual nearest-neighbor, in our experiments, and cited experiments using several others. However, there is active debate on the merits and deficiencies of all these ways of measuring alignment~\citep{bansal2021revisiting, sucholutsky2023getting}. We discuss our choice and show results for other alignment metrics in Appendix \ref{sec:align-metric}.



\paragraph{Lots left to explain} We have shown results where different models arrive at \textit{similar} but not the \textit{same} representations. For example, in \Cref{fig:alignment_comparisons}, alignment clearly increases but only reaches a score of $0.16$, according to our  mutual nearest-neighbor metric. The maximum theoretical value for this metric is $1$. Is a score of $0.16$ indicative of strong alignment with the remaining gap being ``noise'' or does it signify poor alignment with major differences left to explain? We leave this as an open question.

\section*{Acknowledgements}

We thank \citeauthor{lindsey2014color} for sharing their data for our experiments shown in \Cref{fig:color_pAB}. We thank the anonymous reviewers for helpful feedback, and for providing the counterexample on how to visually convey ``I believe in the freedom of speech.'' Thanks for Yonglong Tian, Dilip Krishnan, Anna Decker, Yoon Kim, Jyo Pari, Ani Nrusimha, Dave Epstein, Victor Butoi, and Seungwook Han for helpful discussions and suggestions. We thank Mingzhong Sun for catching a typo. This work was supported by a Packard Fellowship and a Sloan Research Fellowship to P.I., by the MIT-IBM Watson AI Lab, by ONR MURI grant N00014-22-1-2740, by the Center for Brains, Minds, and Machines, the MIT Quest for Intelligence, NSF STC award CCF-1231216, the DARPA Knowledge Management at Scale and Speed (KMASS) program, and the DARPA Machine Common Sense (MCS) program.




{
\bibliographystyle{icml2024}
\bibliography{citations}
}
\clearpage

\newpage
\appendix
\onecolumn
\appendix

\section{Mutual $k$-Nearest Neighbor Alignment Metric}\label{sec:align-metric}
For two models with representations $f$, $g$ the mutual $k$-nearest neighbor metric measures the average overlap of their respective nearest neighbor sets. In this section, we refer to this metric as $m_{\texttt{NN}}$, which we will formally define below.

For cross-modal domains, define $(x_i, y_i) \in \mathcal{X}$ as a sample from the data distribution $\mathcal{X}$ (\eg image-caption dataset). For the single domain alignment measurements, the samples are equivalent $x_i = y_i$ (\eg, images for vision, and text for language). 
Let $\{x_i, y_i\}_{i=1}^{b}$ be the corresponding mini-batch sampled from this data distribution. Then given two model representations $f$ and $g$ the corresponding features are:
$\phi_i =f(x_i)$ and $\psi_i =g(y_i)$, where the collection of these features are denoted as $\Phi = \{ \phi_1, \dots, \phi_b \}$ and $\Psi = \{ \psi_1, \dots, \psi_b \}$. 
Then for each feature pair $(\phi_i, \psi_i)$, we compute the respective nearest neighbor sets $\mathcal{S}(\phi_i)$ and $\mathcal{S}(\psi_i)$. 
\begin{align}
d_{\mathsf{knn}}(\phi_i, \Phi \setminus \phi_i) =  \mathcal{S}(\phi_i) \\
d_{\mathsf{knn}}(\psi_i, \Psi \setminus \psi_i) =  \mathcal{S}(\psi_i)
\end{align}
where $d_{\texttt{knn}}$ returns the set of indices of its $k$-nearest neighbors. Then we measure its average intersection via
\begin{align}
m_{\texttt{NN}}(\phi_i, \psi_i) = \frac{1}{k} \lvert \mathcal{S}(\phi_i) \cap \mathcal{S}(\psi_i) \rvert
\end{align}
where $\lvert {}\cdot{} \rvert$ is the size of the intersection.

\paragraph{The choice to use mutual nearest-neighbors}

Our initial efforts to measure alignment with CKA revealed a very weak trend of alignment between models, even when comparing models within their own modality. This has also been observed by~\cite{bansal2021revisiting}, which had relied on alternative metrics such as model-stitching as it ``reveals aspects of representations that measures such as centered kernel alignment (CKA) cannot''~\cite{bansal2021revisiting}.

We chose to use nearest-neighbor as a metric, as methods like CKA has a very strict definition of alignment, which may not fit our current needs. 
For instance, understanding the precise similarity between unrelated items, such as an orange and Bill Gates, may not be critical.


\paragraph{Relationship between CKA and Mutual Nearest-Neighbors}

Let $\phi_i \in \mathbb{R}^{n}$ and $\psi_i \in \mathbb{R}^{m}$ be vectorized features of two models (\eg language and vision models). Let $\bK_{ij} = \kappa(\phi_i, \phi_j)$ and $\bL_{ij} = \kappa(\psi_i, \psi_j)$ be the kernel matrices computed from a dataset using some kernel-function $\kappa$. Using an inner-product kernel, the $ij$-th entry of the centered counterpart of these Kernel matrices is:
\begin{align}
\bar{\bK}_{ij} = \langle \phi_i, \phi_j \rangle - \mathbb{E}_l[\langle \phi_i, \phi_l \rangle] \qquad\qquad \bar{\bL}_{ij} = \langle \psi_i, \psi_j \rangle - \mathbb{E}_l[\langle \psi_i, \psi_l \rangle]
\end{align}
Then, the cross-covariance of $\bK$ and $\bL$ is given by:
\begin{align}
\mathsf{HSIC}(\bK, \bL) = \frac{1}{(n-1)^2} \tr (\bar{\bK} \bar{\bL})
\end{align}
which serves as an empirical estimator of the Hilbert-Schmidt Independence Criterion~\cite{gretton2005measuring}. The Centered Kernel Alignment~(CKA)~\cite{kornblith2019similarity} is then its normalized counterpart:
\begin{align}
\mathsf{CKA}(\bK, \bL) = \frac{\mathsf{HSIC}(\bK, \bL)}{\sqrt{\mathsf{HSIC}(\bK, \bK) \mathsf{HSIC}(\bL, \bL)}}
\end{align}
CKA measures the congruence between two random variables, with a maximum alignment of $1$ and a minimum of $0$. It is invariant to isotropic scaling and offers a strict notion of alignment, measuring alignment across all samples. Hence, the CKA score reflects the global similarities of the models. This can be illustrated by expanding the trace term in HSIC:
\begin{align}
\tr(\bar{\bK} \bar{\bL}) = \sum_i \sum_j \left(\langle \phi_i, \phi_j \rangle - \mathbb{E}_l[\langle \phi_i, \phi_l \rangle]\right) \left(\langle \psi_i, \psi_j \rangle - \mathbb{E}_l[\langle \psi_i, \psi_l\rangle]\right)
\end{align}
One can modify the definition of alignment to restrict the cross-covariance measurement to samples considered to be nearest neighbors of the current sample $i$. This emphasizes similarity over dissimilarity, biasing the measure toward local alignment:
\begin{align}
\mathsf{Align_{knn}}(\bK, \bL) &= \sum_i \sum_j \alpha(i, j) \cdot \left(\langle \phi_i, \phi_j \rangle - \mathbb{E}_l[\langle \phi_i, \phi_l \rangle]\right) \left(\langle \psi_i, \psi_j \rangle - \mathbb{E}_l[\langle \psi_i, \psi_l\rangle]\right) \\
&\text{where} \qquad \alpha(i, j) = \mathbb{1}[\phi_j \in \mathsf{knn}(\phi_i) \land \psi_j \in \mathsf{knn}(\psi_i) \land i \neq j]
\end{align}

\begin{figure*}[t!]
    \centering
    \includegraphics[width=0.98\linewidth]{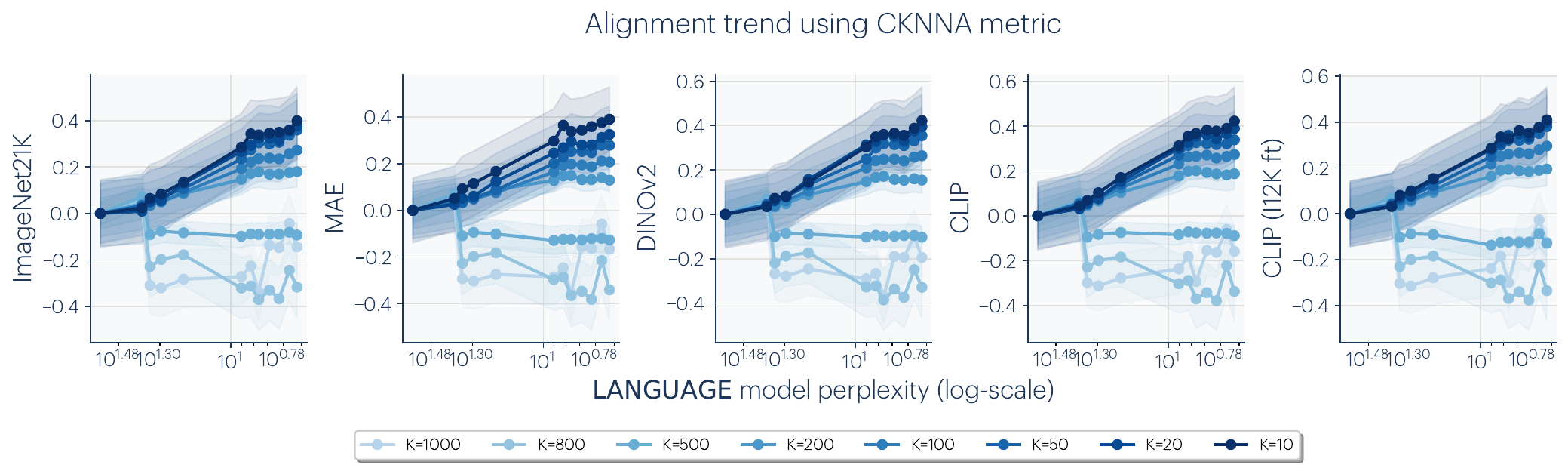}\\
    \caption{\small \textbf{Cross-modal alignment increases locally:} 
    Alignment trend when varying the top-$k$ nearest neighbors in the CKNNA metrics~(\eqn{eqn:cknna}). We center alignment score to the smallest language model and divide the total trend by the standard deviation. When $k=1024$, we recover the original CKA metric, and when $k < | \mathcal{X} |$ it closely resembles the mutual nearest-neighbor metric $m_{\texttt{NN}}$. Each line represents the average of all LLM models for a specific $k$. As we decrease $k$, the alignment becomes more pronounced.}
    \label{fig:cknna}
    \vspace{-3pt}
\end{figure*}

Where $\alpha(i, j)$ is a scalar weighting that assigns $1$ if $j$ is a mutual nearest neighbors to both $\phi_i$ and $\psi_i$, and $0$ otherwise. We refer to this metric as the Centered Kernel Nearest-Neighbor Alignment (CKNNA) metric. As the number of nearest neighbors $k \rightarrow \dim(\bK)$, we recover the original CKA metric. 

\begin{align}
\label{eqn:cknna}
\mathsf{CKNNA}(\bK, \bL) = \frac{\mathsf{Align_{knn}}(\bK, \bL)}{\sqrt{\mathsf{Align_{knn}}(\bK, \bK), \mathsf{Align_{knn}}(\bL, \bL)}}
\end{align}


We can further relax the metric to treat the cross-covariance term identically across all nearest-neighbor samples. This is equivalent to the assumption that all nearby samples have the same distance. This simplification leads us back to the mutual nearest neighbor metric:
\begin{align}
\sum_i \sum_{j} \alpha(i, j) \cdot 1 = n \cdot k \cdot m_{\texttt{NN}}(\phi_i, \psi_i)
\end{align}

By equating these metrics, we analyze the changes in alignment between language and vision models as we vary the number of neighbors $k$ in \eqn{eqn:cknna}. In \fig{fig:cknna}, we compute the average alignment score across all LLM models. For each $k$, we center the scores to the smallest vision model and divide by the standard deviation of the scores. We find that high values of $k$ show less conclusive alignment across tasks while decreasing $k$ shows a coherent trend across both models and tasks.

\newpage
\section{Consistency across various metrics}
\label{app:other-metrics}

We describe the metrics in~\tbl{tbl:metrics} and their corresponding properties. The \textit{symmetric} property implies that the metric is symmetric with respect to the data points $d(x, y) = d(y, x)$. The \textit{global} property means all samples are used to compute the distance with respect to every sample. The \textit{ordinal} property is when the ordering of the distance is taken into consideration. For example, mutual nearest neighbor is not ordinal since the nearest neighbors $\{a, b, c\}$ and $\{c, a, b\}$ are treated equally. The \textit{batchable} property is a computational property that makes it feasible to compute in a reasonable time frame.

\paragraph{Vision-vision comparison} 
In \Cref{fig:app-same-modal-rankr}, we evaluate Spearman's rank correlation among different metrics and hyperparameters over $78$ vision models (details in \Cref{sec:vision-vision-details}). We find most metrics highly correlated with each other.

\paragraph{Cross-modal comparison} We measure vision-language alignment using a range of alternative metrics.  We visualize the corresponding alignment results in~\fig{fig:metrics1of2} and~\fig{fig:metrics2of2}. Our findings indicate that alignment sensitivity not only depends on the metric used to compute it but also varies according to the specific tasks on which the vision models are trained.

\vspace{0.3in}
\begin{figure}[hbtp]
\centering
\newcommand{\centercell}[1]{\multicolumn{1}{C}{#1}}
\definecolor{lightgray}{gray}{0.9}  
\begin{tabular}{rccccp{8.2cm}} 
    \toprule
    \multirow{2}{*}{\textbf{Metric}} & \multicolumn{4}{c}{\textbf{Property}} & \thead{\multirow{2}{*}{\textbf{Description}}} \\
    \cmidrule(lr){2-6}
    & \small symmetric & \small global & \small ordinal & \small batchable & \\
    \midrule
    \rowcolor{lightgray}
    \multirow{3}{*}[-0.5em]{CKA} & 
    \multirow{3}{*}[-0.5em]{\cmark} & 
    \multirow{3}{*}[-0.5em]{\cmark} & 
    \multirow{3}{*}[-0.5em]{\cmark} & 
    \multirow{3}{*}[-0.5em]{\cmark} & 
    {\vspace{-0.5em}Centered Kernel Alignment~(CKA; \citet{kornblith2019similarity}) measures the similarity of neural networks by comparing the alignment of their kernel induced by their feature spaces.} \\[3.1em]
    \multirow{2}{*}[-0.5em]{Unbiased CKA} & 
    \multirow{2}{*}[-0.5em]{\cmark} & 
    \multirow{2}{*}[-0.5em]{\cmark} & 
    \multirow{2}{*}[-0.5em]{\cmark} & 
    \multirow{2}{*}[-0.5em]{\cmark} & 
    {\vspace{-0.5em}Unbiased estimator of CKA that corrects for sample bias in HSIC~\cite{song2012feature}.} \\[1.8em]
    \rowcolor{lightgray}
    \multirow{4}{*}[-0.5em]{SVCCA} & 
    \multirow{4}{*}[-0.5em]{\cmark} & 
    \multirow{4}{*}[-0.5em]{\cmark} & 
    \multirow{4}{*}[-0.5em]{\cmark} & 
    \multirow{4}{*}[-0.5em]{\cmark} & 
    {\vspace{-0.5em}Singular Value Canonical Correlation Analysis~(SVCCA; \citet{raghu2017svcca}) compares neural networks by decomposing their activities into singular vectors and measuring correlation.} \\[4.2em]
    \multirow{2}{*}[-0.5em]{Mutual $k$-NN} & 
    \multirow{2}{*}[-0.5em]{\cmark} & 
    \multirow{2}{*}[-0.5em]{ } & 
    \multirow{2}{*}[-0.5em]{ } & 
    \multirow{2}{*}[-0.5em]{\cmark} &
    {\vspace{-0.5em}Measures the intersection over union (IoU) of nearest neighbors between two models.} \\[1.8em]
    \rowcolor{lightgray}
    \multirow{2}{*}[-0.5em]{CKNNA} &
    \multirow{2}{*}[-0.5em]{\cmark} &
    \multirow{2}{*}[-0.5em]{\cmark$\ast$} &
    \multirow{2}{*}[-0.5em]{\cmark} &
    \multirow{2}{*}[-0.5em]{\cmark} & 
    {\vspace{-0.5em}Modified CKA measure that computes the kernel alignment only for its nearest neighbors. See~\app{sec:align-metric}.} \\[1.8em]
    \multirow{3}{*}[-0.5em]{Cycle $k$-NN} &
    \multirow{3}{*}[-0.5em]{ } &
    \multirow{3}{*}[-0.5em]{ } &
    \multirow{3}{*}[-0.5em]{ } &
    \multirow{3}{*}[-0.5em]{\cmark} & 
    {\vspace{-0.5em}Measures whether the nearest neighbor in one domain also considers the original sample as its nearest neighbor in the other domain.}  \\[3.1em]
    \rowcolor{lightgray}
    \multirow{3}{*}[-0.5em]{Edit $k$-NN} &
    \multirow{3}{*}[-0.5em]{\cmark} &
    \multirow{3}{*}[-0.5em]{\cmark$\ast$} &
    \multirow{3}{*}[-0.5em]{\cmark} &
    \multirow{3}{*}[-0.5em]{ } & 
    {\vspace{-0.5em}Computes the edit distance required to match the nearest neighbors between two datasets. The score is normalized by the maximum edit distance.} \\[3.1em]
    \multirow{2}{*}[-0.5em]{LCS $k$-NN} &
    \multirow{2}{*}[-0.5em]{\cmark} &
    \multirow{2}{*}[-0.5em]{\cmark$\ast$} &
    \multirow{2}{*}[-0.5em]{\cmark} &
    \multirow{2}{*}[-0.5em]{ } & 
    {\vspace{-0.5em}Calculates the longest common subsequence of nearest neighbors and is normalized by the sequence length.} \\[1.7em]
    \bottomrule
\end{tabular}
\caption{Comparative analysis of neural network similarity metrics. \cmark$\ast$ indicates the metric is global and still meaningful when the nearest neighbor $k$ is set to maximum batch-size $k=|\mathcal{X}|$.}
\label{tbl:metrics}
\end{figure}

\begin{figure*}[hbtp]
    \centering
    \includegraphics[width=1\linewidth]{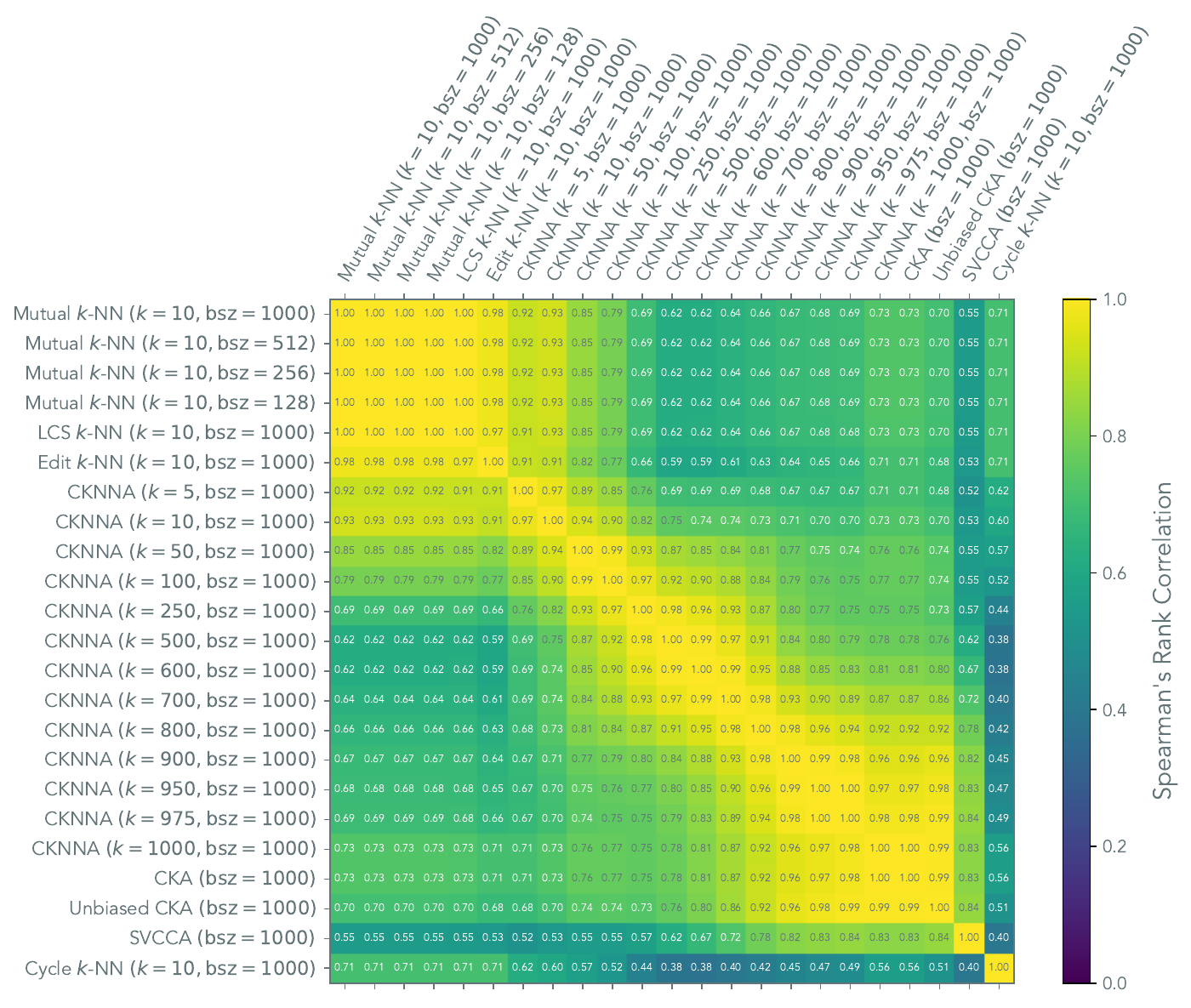}
    \caption{\textbf{Vision-vision alignment measured with various metrics.} Spearman's rank correlation among different metrics and batch sizes ($\mathsf{bsz}$) when used to measure alignment among $78$ vision models (see \Cref{sec:vision-vision-details} for details of these models). All $p$-values are below $2.24\times 10^{-105}$. Our vision-vision analysis in \Cref{fig:vm_align} is based on the first metric (Mutual $k$-NN with $k=10$ and $\mathsf{bsz}=1000$).}
    \label{fig:app-same-modal-rankr}
\end{figure*}

\begin{figure*}[hbtp]
    \centering
    \subfigure[CKA]{
        \includegraphics[width=1.0\linewidth]{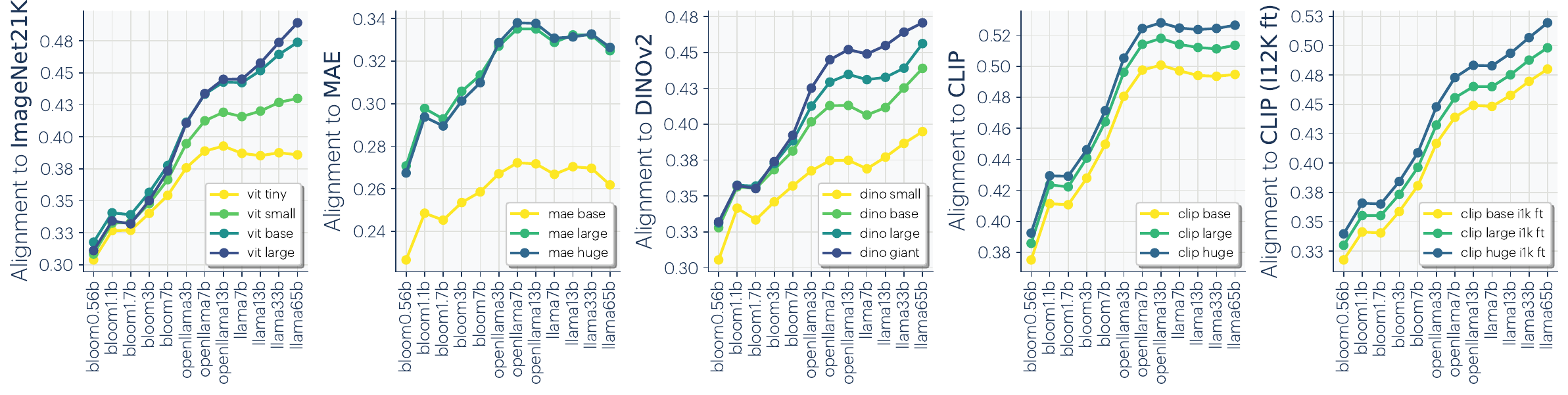}
    }\\
    \subfigure[Unbiased CKA]{
        \includegraphics[width=1.0\linewidth]{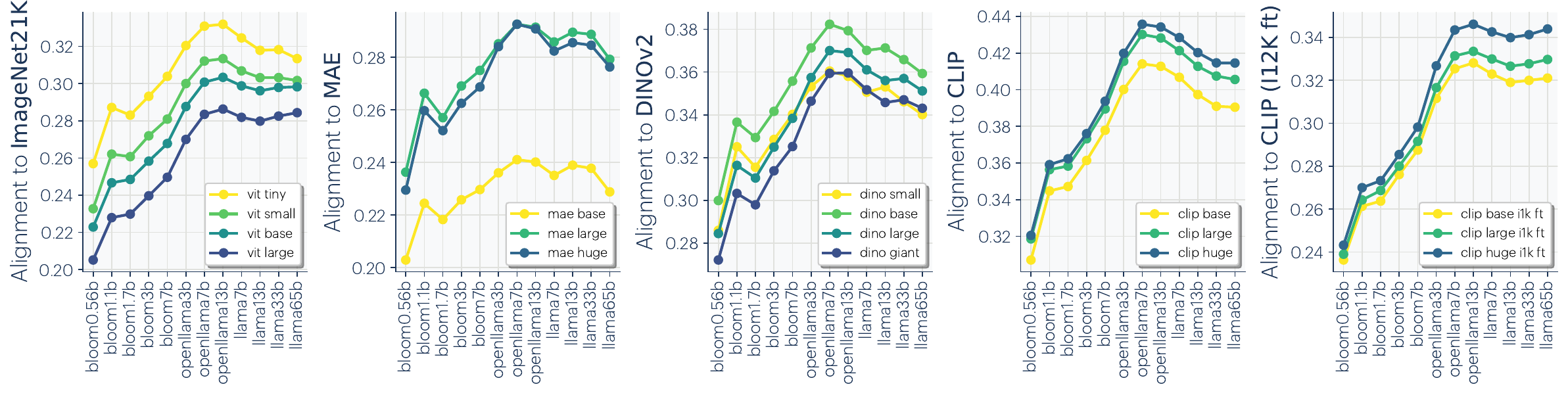}
    }\\
    \subfigure[SVCCA]{
        \includegraphics[width=1.0\linewidth]{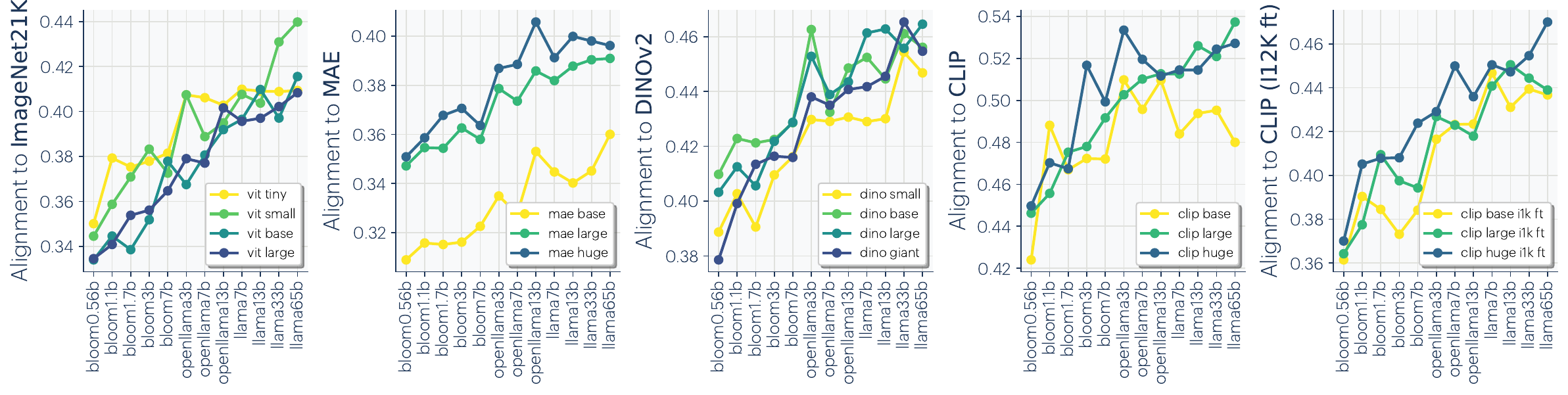}
    }\\
    \subfigure[Mutual $k$-NN ($k=10$)]{
        \includegraphics[width=1.0\linewidth]{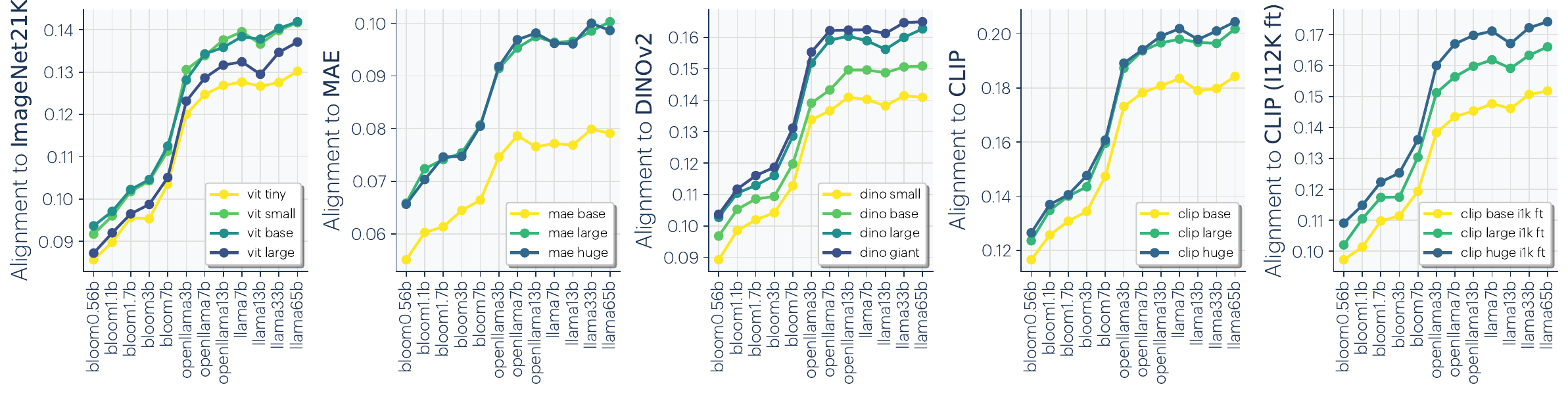}
    }
    \caption{\textbf{Cross-modal alignment for various metrics}}
    \label{fig:metrics1of2}
\end{figure*}

\begin{figure*}[hbtp]
    \centering
    \subfigure[CKNNA ($k=10$)]{
        \includegraphics[width=1.0\linewidth]{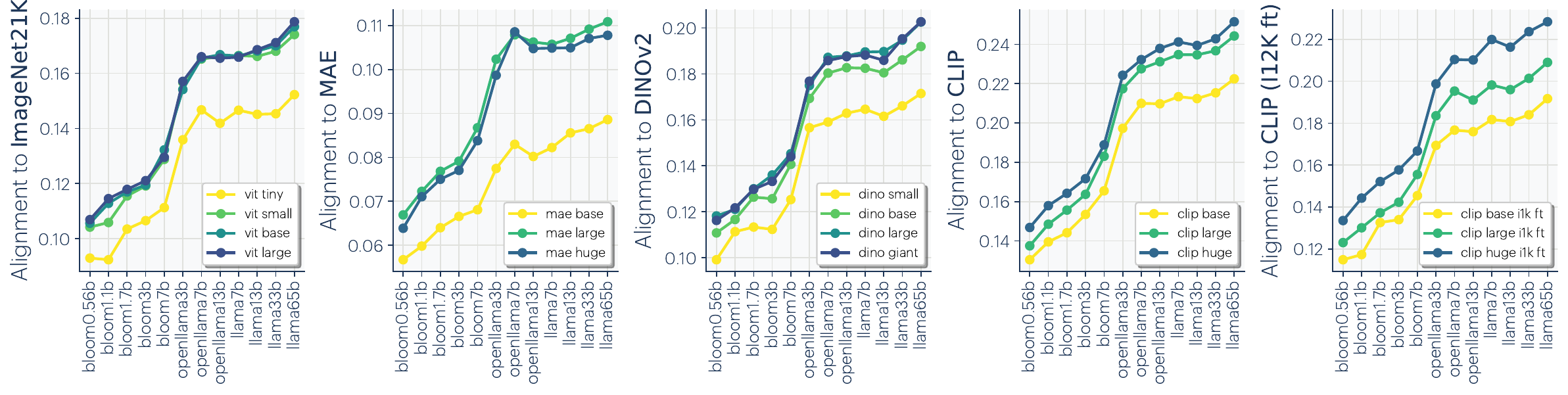}
    }
    \subfigure[Cycle $k$-NN ($k=10$)]{
        \includegraphics[width=1.0\linewidth]{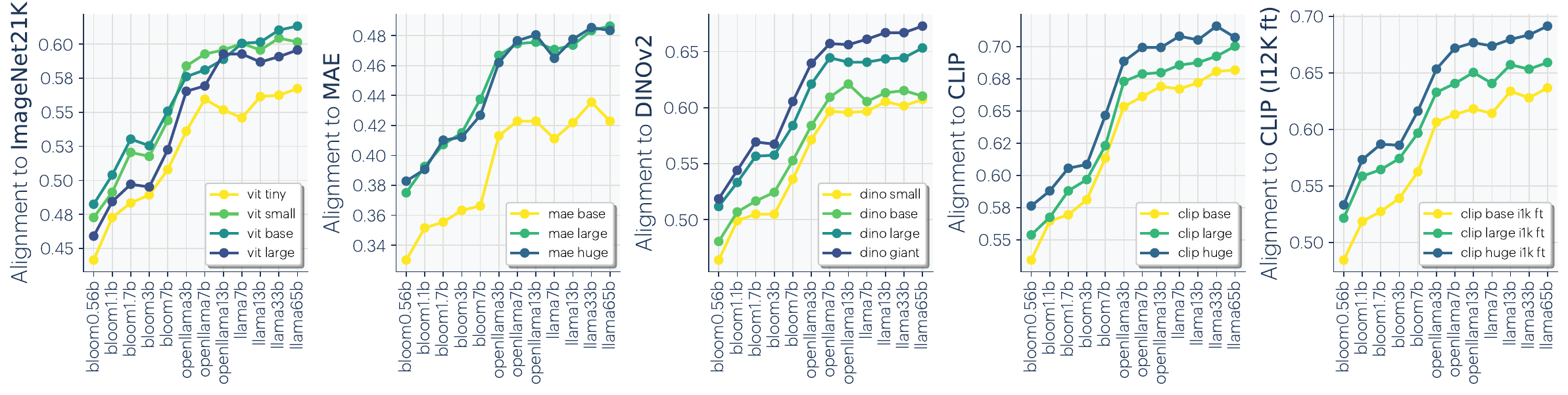}
    }\\
    \subfigure[Edit-distance $k$-NN ($k=10$)]{
        \includegraphics[width=1.0\linewidth]{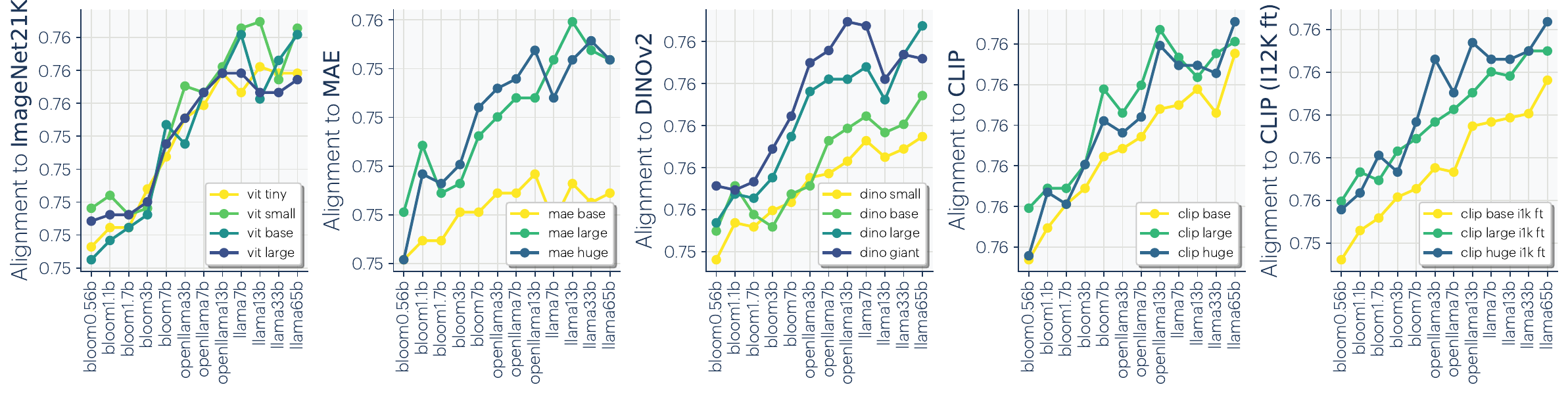}
    }\\
    \subfigure[Longest-Common-Subsequence $k$-NN ($k=10$)]{
        \includegraphics[width=1.0\linewidth]{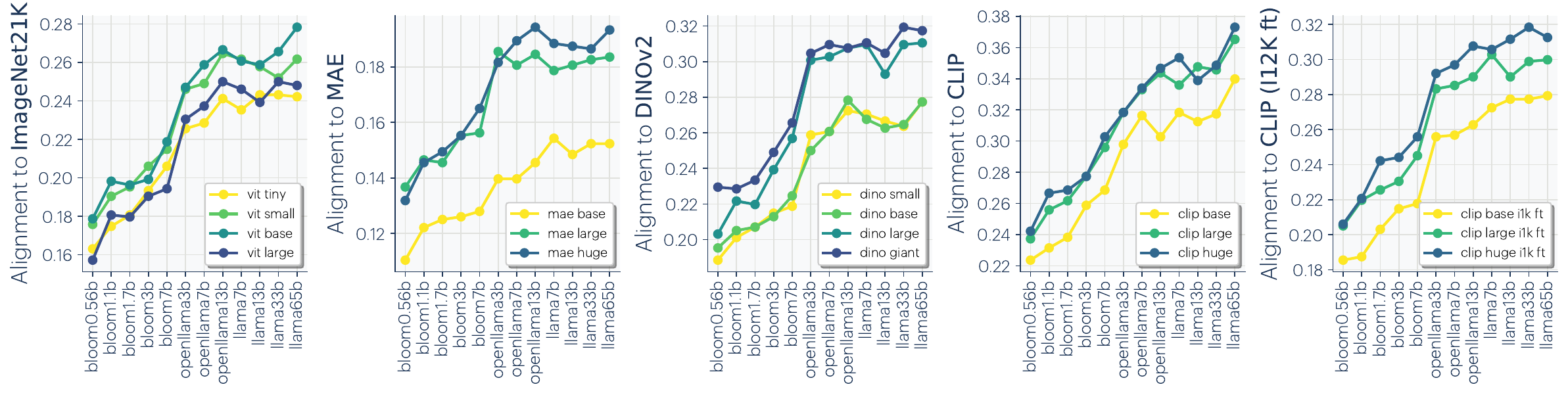}
    }
    \caption{\textbf{Cross-modal alignment measured with various metrics}}
    \label{fig:metrics2of2}
\end{figure*}

\newpage
\section{Experiments on Evaluating Alignment and Convergence}
\label{sec:alignment_methods}

To demonstrate representational convergence, we take off-the-shelf models at multiple scales and multiple modalities and measure their representational alignment. 

\subsection{Vision-Vision Alignment and Representation Quality}
\label{sec:vision-vision-details}

We consider 78 vision models in total: \begin{itemize}
    \item $17$ ViT models ranging from ViT-tiny to ViT-giant, trained on tasks including ImageNet-21k~\cite{dosovitskiy2020image} classification, Masked Autoencoders~\cite{he2021masked}, DINO~\cite{caron2021emerging}, and CLIP~\cite{radford2021learning}, including some finetuned on ImageNet-12k. 
    
    \item $1$ randomly initialized ResNet-50.
    \item $11$ ResNet-50 models trained with contrastive learning on ImageNet-1k, Places-365 \citep{zhou2017places,lopez2020semantic}, and $9$ synthetic image datasets used in \citet{baradad2022procedural}.  
    \item $49$ ResNet-18 models trained with Alignment and Uniformity contrastive loss \citep{tongzhouw2020hypersphere} on ImageNet-100, Places-365, and $47$ realistic and synthetic image datasets from \citet{baradad2021learning}.
\end{itemize}

To test representation quality, we evaluate linear probing performance on all 19 VTAB classification tasks \citep{zhai2019vtab}, which is a standard multi-task transfer learning benchmark containing structured, specialized, and natural datasets covering diverse domains. To reduce compute requirements, we subsample training and validation datasets to have at most 10{,}000 samples. We consider a representation solves a task if its performance is $\geq 80\%$ of the best performance on that task across all 78 models. 


To compute the alignment metric, we use $k=10$ nearest neighbors over $1000$ image representations computed on Places-365's validation dataset \citep{zhou2017places}. This dataset is disjoint from VTAB datasets, although both contain natural images.

\subsection{Cross-Modal Alignment}
\label{sec:vision-language-details}

We compare the representation of an image in a vision model to the representation of a caption describing that image in a language model. The language model families we consider are BLOOM~\cite{bigscience2022bloom}, OpenLLaMA~\cite{openlm2023openllama}, and LLaMA~\cite{touvron2023llama}.
For~\fig{fig:downstream}, we included more recent model families such as OLMo~\cite{groeneveld2024olmo}, LLaMA3~\cite{meta2024llama3}, Gemma~\cite{team2024gemma}, and Mistral/Mixtral~\cite{jiang2023mistral,jiang2024mixtral}. These models were downloaded from Huggingface~\cite{wolf2019huggingface}.

For vision models, we consider ViT models~\cite{dosovitskiy2020image} of various sizes trained on various data and objectives. We mainly consider the popular vision models: classification on ImageNet-21K~\cite{russakovsky2015imagenet}, MAE~\cite{he2021masked}, DINOv2~\cite{oquab2023dinov2}, CLIP~\cite{radford2021learning}, and CLIP finetuned on ImageNet-12K. These models were downloaded from PyTorch Image Models~(TIMM; \citet{timm}). This is a subset of the models used in vision-vision comparison.



To compute the alignment metric, we use $k=10$ nearest neighbors over 1024 samples from WIT (Wikipedia-based Image Text; \citet{srinivasan2021wit}). For the vision model, we use class token of each layer, and for the language model, we average pool each layer to a single token. Since it is not trivial to determine where the alignment might occur, we draw inspiration from BrainScore\cite{schrimpf2018brain} and compute pairwise alignment scores, then take the maximum. One of these pairwise comparisons also includes concatenated features. We apply $l_2$ normalization to the features before measuring the distance. As transformer architectures have ``emergent outliers''~\cite{dettmers2022gpt3}, we truncate the elements in the features that are above the $95$-th percentile.

Simply taking the last token did not show any strong alignment signal. We also experimented with prompting the language model and taking the last token representation. The prompt we used was 
\begin{align*}
    \texttt{An image with the caption `{<caption>}'. This is an image of a <fill>}
\end{align*}
Using prompting showed similar trends to average pooling but had slightly lower alignment scores.

\section{Color Cooccurrence Experiment}
\label{sec:color_cooccurrences}

Here we describe the details of how we created the four color representations visualized in \Cref{fig:color_pAB}, from left to right.

\paragraph{Perceptual representation from CIELAB color space} We embed pixels taken from the CIFAR-10 image dataset \citep{krizhevsky2009learning,torralba200880} based on the CIELAB color space, which is designed as a \emph{perceptually uniform} space that changes numerical values correspond to similar perceived changes in color.

\paragraph{Three representations from cooccurrence in VISION and LANGUAGE}
For these three representations, we first obtain a dissimilarity matrix over colors (in different ways detailed below), then use multidimensional scaling~\citep{shepard1980multidimensional} to find a 3-dimensional embedding in which Euclidean distance between the embeddings for $A$ and $B$, ${z}_A$ and ${z}_B$, best matches this dissimilarity matrix. We use $1{,}000$ fits and take the best match. Afterward, we visually align it with the CIELAB space by finding the best rotation, translation, scaling, and flipping, by running the Kabsch-Umeyama algorithm \citep{kabsch1976solution,kabsch1978discussion,umeyama1991least} twice, once on $\mathbf{z}$ and once on $-\mathbf{z}$, to account for flipping. The dissimilarity matrix we used in each case is described as following:\begin{itemize}
    \item \textbf{VISION: Pixel cooccurrence.} 
    We collect color cooccurrence statistics from the CIFAR-10 dataset, and estimate a joint distribution $p(A,B)$ over $300{,}000$ randomly sampled pixel colors $A$ and $B$ that occur within a radius of at most 4 pixels of one another. Colors are quantized on a grid in RGB space and represented as discrete variables, and $p(A,B)$ is modeled as a table of normalized counts, from which we compute the empirical pointwise mutual information matrix $\Kpmi(A, B)$. Quantization ensures that there is no bias from how color distances are represented in RGB space. Dissimilarity matrix is defined as $-\Kpmi(A, B) + c$, where $c = \max_{A, B} \Kpmi(A, B)$ is an offset to ensure non-negativity (similar to the constant in \Cref{sec:simple-contra-kpmi} and \Cref{prop:kpmi-psd-scale} that ensures neural networks can express $\Kpmi$).
    \item \textbf{LANGUAGE.} We used an approach similar to \citet{abdou2021can}.  \begin{itemize}
        \item We take $20$ pairs of (color, word) appeared in the dataset collected by \citet{lindsey2014color}, where $51$ participants were asked to free name each of the $330$ colors from the Munsell Color Chart. We filtered words that appeared less than $100$ times, and computed each word's associate color by taking the centroid in CIELAB space. Our filtering process followed \citet{abdou2021can} exactly, but resulted in $20$ colors, a slightly different set than the $18$ colors they claimed. 
        \item For each of the $20$ color words $\texttt{<col>}$, we construct three sentences: \begin{align*}
            & \texttt{The color <col>.} \\
            & \texttt{This color is <col>.} \\
            & \texttt{The color of this thing is <col>.}
        \end{align*}
        and obtain the average sentence embedding from the language encoder, as the embedding for $\texttt{<col>}$ (details below). We find this approach more effective than \citet{abdou2021can}, which uses object names that potentially have color biases, even though the objects may appear in multiple colors.
        \item Unlike \citet{abdou2021can}, we did not perform linear regression from language embedding to CIELAB space, which distorts distances and easily overfits with only $20$ samples. Instead, we used multidimensional scaling to best preserve distances, as described above.
        \item \textbf{Masked language contrastive learning (SimCSE) embedding: }
        We used sentence embedding from the unsupervised SimCSE RoBERTa-L \citep{gao2021simcse} to encode the above sentences into $1024$-dimensional embeddings, and used the pairwise Euclidean distances among $\texttt{<col>}$ embeddings as the dissimilarity matrix.
        \item \textbf{Masked language predictive learning (RoBERTa) embedding: }
        We concatenated hidden states of the last four layers of RoBERTa-L \citep{liu2019roberta}, following \citep{devlin2018bert}. We averaged across token dimensions, and obtained a $4096$-dimensional embedding for each of the above sentences, and used the pairwise Euclidean distances among $\texttt{<col>}$ embeddings as the dissimilarity matrix.
    \end{itemize}
\end{itemize}

\section{Caption Density Experiments}
\label{sec:caption_density}
We use LLaMA3-8B-Instruct~\cite{meta2024llama3} to generate summary captions at various densities for images in the Densely Captioned Images dataset~\cite{urbanek2023picture} from the train split. Following ~\citet{urbanek2023picture}, we prompt the language model with the following instructions to generate captions at differing granularity:

\texttt{system: You are given a full-text description of an image. You should summarize it into about <num\char`_words> words, being sure to include as much salient visual information as possible given the <num\char`_words> word constraint, especially information from the start of the original description. The new description should apply for the original image. Respond with only the summary, in one line.}

\texttt{user: <original\char`_caption>}

We measure the alignment with this generated caption to test our hypothesis that denser captations would result in higher alignment scores. In~\Cref{fig:caption_density}, we find that the alignment score also improves as caption length increases.

\section{Analysis of Contrastive Learners}\label{sec:analysis_contrastive}

\subsection{Contrastive objectives learn pointwise mutual information}\label{sec:analysis_contrastive-pmi}

There are two widely used forms of contrastive objectives. We now discuss each form in detail and show how they both are minimized by the pointwise mutual information (PMI) as stated in \Cref{eqn:contr-pmi}. To simplify notation, we consider learning the bivariate model $g(x_a, x_b) \in \mathbb{R}$. In \Cref{sec:what_rep},  such $g$ is optimized within the family of $\{g = \langle f_X, f_X \rangle \colon f_X \in \mathcal{F}_X\}$. 

Recall that our positive pairs are sampled from $(x, x_+) \sim \Pco$, and that the negative pairs are sampled independently from its marginals which we denote as $(x, x_-) \iidsim P$ where  $P(x) = \sum_{x_+} \Pco(x, x_+)$.
\begin{enumerate}
    \item \textbf{The binary NCE loss \citep{gutmann2010noise}} is defined with a certain prior over sampling positive vs.~negative pairs. Let $p_\mathsf{pos}$ be the probability of sampling a positive pair. Then the loss is given by \begin{equation}
        \mathcal{L}_\mathsf{binary\mbox{-}NCE}(g) 
        \trieq p_\mathsf{pos} \cdot \mathbb{E}_{(x, x_+) \sim \Pco}\left[ -\log \sigma(g(x, x_+)) \right]
        + (1 - p_\mathsf{pos}) \cdot \mathbb{E}_{(x, x_-) \iidsim P}\left[ -\log \sigma(-g(x, x_-)) \right].
    \end{equation}

    The Bayes optimal solution is given by \begin{align}
        g(x_a, x_b) 
        & = \log \frac{P(\texttt{pos} \given x_a, x_b)}{1 - P(\texttt{pos} \given x_a, x_b)} \\
        & = \log \frac{P(\texttt{pos}, x_a, x_b)}{P(\texttt{neg}, x_a, x_b)} \\
        & = \log \frac{p_\mathsf{pos} \cdot \Pco(x_a, x_b)}{(1 - p_\mathsf{pos}) P(x_a) P(x_b)} \\
        & = \log \frac{\Pco(x_a, x_b)}{P(x_a)P(x_b)} + \log \frac{p_\mathsf{pos}}{1 - p_\mathsf{pos}} \\
        & = \Kpmi(x_a, x_b) + c_X.
    \end{align}
    \item \textbf{The InfoNCE loss \citep{oord2018representation}} is defined with randomly sampling one positive pair along with $K$ negative ones. With some hyperparameter $\tau > 0$, the loss is given by \begin{equation}
        \mathcal{L}_\mathsf{InfoNCE}(g) 
        \trieq \mathbb{E}_{\substack{(x, x_+) \sim \Pco \\ (x_-^{(1)}, x_-^{(2)}, \dots, x_-^{(K)}) \iidsim P}}\left[ -\log \frac{e^{g(x, x_+) / \tau}}{e^{g(x, x_+) / \tau} + \sum_{i=1}^K e^{g(x, x_-^{(i)}) / \tau}} \right].
    \end{equation}

    The Bayes optimal solution is given by \begin{align}
        \frac{e^{g(x, x_+) / \tau}}{e^{g(x, x_+) / \tau} + \sum_{i=1}^K e^{g(x, x_-^{(i)}) / \tau}}
        & = \frac{\Pco(x_+ \given x) \prod_j P(x_-^{(j)}) }{\Pco(x_+ \given x) \prod_j P(x_-^{(j)})  + \sum_i \Pco(x_-^{(i)} \given x) P(x_+) \prod_{j \neq i} P(x_-^{(j)})} \\
        & = \frac{\Pco(x_+ \given x) / P(x_+) }{\Pco(x_+ \given x) / P(x_+) + \sum_i \Pco(x_-^{(i)} \given x) / P(x_-^{(i)})}.
    \end{align}

    For $\tau = 1$, this optima corresponds to $g$ choices where \begin{align}
        g(x_a, x_b) 
        & = \log \frac{\Pco(x_b \given x_a)}{P(x_b)} + c_X(x_a) \\
        & = \Kpmi(x_a, x_b) + c_X(x_a).
    \end{align}

    For the general $\tau \neq 1$ case, we have $g$ (and corresponding $f_X$) recovers $\Kpmi$ up to an offset and a scale. Our main argument in \Cref{sec:what_rep} that $f_X$ recovers $\Kpmi$ still holds.
\end{enumerate}

\subsection{Contrastive learners can represent $\Kpmi$ exactly under smoothness conditions}
\label{sec:analysis_contrastive-exact-repr}

We want to express $\Kpmi + C$ using some representation function $f_X \colon \mathcal{X} \rightarrow \mathbb{R}^n$ so that \begin{equation}
    \langle f_X(x_a), f_X(x_b) \rangle = \Kpmi(x_a, x_b) + C, \qquad\text{for some $C$.}
\end{equation}For such an $f_X$ to exist, an equivalent criterion is that $\Kpmi + C$ is positive semi-definite (PSD), as can be seen from eigendecomposition.

\begin{proposition}\label{prop:kpmi-psd-scale}
    Suppose that the off-diagonal elements of $\Kpmi$ are bounded within $[\log \rho_\mathsf{min}, \log \rho_\mathsf{min} + \delta] \in (-\infty, 0]$. We have $\Kpmi + C$ is positive semi-definite (PSD) for some $C$ if the joint distribution is sufficiently smooth: \begin{equation}
        \frac{\PcoiCi}{\Pcoi} \geq e^{N \delta} \rho_\mathsf{min}\mathrlap{,\qquad \forall i.}
    \end{equation}
\end{proposition}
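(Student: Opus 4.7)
The plan is to show that, for a suitable constant $C$, the $N \times N$ Gram matrix $M$ with entries $M_{ij} = \Kpmi(x_i, x_j) + C$ is positive semi-definite; the existence of an $f_X$ satisfying $\langle f_X(x_a), f_X(x_b) \rangle = \Kpmi(x_a, x_b) + C$ then follows from the eigendecomposition $M = U^\top U$, taking the columns of $U$ as feature vectors. Thus the task reduces to choosing $C$ so that $M \succeq 0$.

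The main tool I would use is Gershgorin's disc theorem: a symmetric matrix with non-negative diagonal satisfying $M_{ii} \geq \sum_{j \neq i} |M_{ij}|$ in every row is PSD. The idea is therefore to pick $C$ so that the shifted off-diagonal entries are small in magnitude while the shifted diagonal entries are large enough to dominate. First I would set $C = -\log \rho_\mathsf{min}$. By hypothesis the off-diagonals of $\Kpmi$ lie in $[\log \rho_\mathsf{min}, \log \rho_\mathsf{min} + \delta]$, so the shifted off-diagonals satisfy $M_{ij} \in [0, \delta]$ for all $i \neq j$. Consequently $\sum_{j \neq i} |M_{ij}| \leq (N-1)\delta$ in every row.

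Second, I would convert the smoothness hypothesis $\Pco(x_i \mid x_i)/\Pco(x_i) \geq e^{N\delta}\rho_\mathsf{min}$ into a lower bound on the shifted diagonal. Since $\Kpmi(x_i, x_i) = \log\bigl(\Pco(x_i \mid x_i)/\Pco(x_i)\bigr)$ (using that the marginal of $\Pco$ coincides with $P$), the hypothesis rearranges to $\Kpmi(x_i, x_i) \geq N\delta + \log\rho_\mathsf{min}$, which gives $M_{ii} \geq N\delta$. Combining the two estimates yields $M_{ii} \geq N\delta > (N-1)\delta \geq \sum_{j \neq i} |M_{ij}|$, with $M_{ii} \geq 0$, so Gershgorin delivers $M \succeq 0$.

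The whole argument is essentially a diagonal-dominance calculation, so I do not expect any serious obstacle. The main care points are bookkeeping: verifying that $\Pco(x_i \mid x_i)/\Pco(x_i)$ in the hypothesis really equals $e^{\Kpmi(x_i, x_i)}$ under the definitions of \Cref{sec:simple-contra-kpmi}, and confirming that the chosen offset $C = -\log \rho_\mathsf{min} \geq 0$ (which holds because $\rho_\mathsf{min} \leq 1$) can be absorbed into $f_X$, for instance by concatenating a constant coordinate of value $\sqrt{C}$ so that the inner-product representation actually produces $\Kpmi + C$.
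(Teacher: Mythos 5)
Your proof is correct and rests on the same underlying tool as the paper's --- diagonal dominance, hence Gershgorin --- but with a different and arguably cleaner explicit choice of offset. The paper takes $C = -\min_i \tfrac{1}{N}\sum_j \Kpmi(x_i,x_j)$, which forces the shifted off-diagonals to be non-positive, and must then verify two things separately: that this sign condition holds (so the absolute values can be removed) and that the resulting row-dominance inequality reduces to the stated hypothesis. You instead take $C = -\log\rho_\mathsf{min}$, which immediately places the off-diagonals of $M = \Kpmi + C$ in $[0,\delta]$; the row-sum bound $\sum_{j\neq i}\lvert M_{ij}\rvert \le (N-1)\delta$ is then trivial, the hypothesis directly gives $M_{ii}\ge N\delta$, and diagonal dominance with non-negative diagonal finishes the job. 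Both routes arrive at the same sufficient condition, but yours bypasses the auxiliary sign-verification step and is a shorter path to the same place. Two small touch-ups: write $N\delta\ge (N-1)\delta$ rather than a strict inequality, since nothing in the hypotheses rules out $\delta=0$ (diagonal dominance only needs $\ge$); and drop the closing aside about appending a constant coordinate $\sqrt{C}$ to $f_X$, since that construction presupposes $\Kpmi$ itself is already a Gram matrix, which is precisely what is not given --- the eigendecomposition factorization of $\Kpmi + C$ that you opened with is the correct way to extract $f_X$.
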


\begin{proof}

    Note that $\Kpmi +C$ still only has non-positive off-diagonal elements if \begin{equation}
        - C \geq \log \rho_\mathsf{min} + \delta. \label{eq:C-cond}
    \end{equation}
    For such $C$, it is diagonally dominant (and thus PSD) if, \begin{equation}
        \mathllap{\forall i,\qquad} \Kii +C \geq \sum_{j\neq i} \abs{\Kij + C} = - (N - 1) C - \sum_{j\neq i} \Kij,
    \end{equation}
    or equivalently, \begin{equation}
        \mathllap{\forall i,\qquad} NC + \sum_j \Kij \geq 0.\label{eq:diag-dom-sufficient}
    \end{equation}
    
    The following choice of $C$ readily satisfies the above \Cref{eq:diag-dom-sufficient}: \begin{equation}
        C \trieq -\min_i \frac{1}{N} \sum_j \Kij. 
    \end{equation}

    Therefore, it remains to show that \Cref{eq:C-cond} is true.  Note that \begin{equation}
        - C \trieq \min_i \frac{1}{N} \sum_j \Kij \geq \frac{N-1}{N} \log \rho_\mathsf{min} + \frac{1}{N} (\min_i \Kii). 
    \end{equation}

    Therefore, it suffices to have \begin{equation}
        \log \rho_\mathsf{min} + \delta \leq \frac{N-1}{N}  \log \rho_\mathsf{min} + \frac{1}{N} (\min_i \Kii). 
    \end{equation}
    Rearranging terms gives the desired condition \begin{equation}
    \frac{\PcoiCi}{\Pcoi} \geq e^{N \delta} \rho_\mathsf{min}\mathrlap{,\qquad \forall i.}
    \end{equation}
\end{proof}
\begin{remark}
\Cref{prop:kpmi-psd-scale} is one example that a sufficiently smooth world or a sufficiently high sampling rate allows the PMI kernel $\Kpmi$ to be \emph{exactly} represented as inner products of a learned feature space (up to a scale). The condition here can be satisfied, for example, if the off-diagonal terms decay linearly with respect to $N$ and stay sufficiently close to each other. While the condition is somewhat strict, it 
captures the essence that smoothness and continuity allow easier learning. Nonetheless, we note that exact representation is not necessary for convergence, and thus this requirement can likely be relaxed.  Please see \Cref{sec:limitations} for discussions on practical settings.
\end{remark}








\end{document}